\newcommand{\eps}{\epsilon}
\newcommand{\vp}{\vec p}
\renewcommand{\r}{\vec r}
\newcommand{\X}{\mathcal X}
\newcommand{\Y}{\mathcal Y}
\renewcommand{\Z}{\mathcal Z}
\renewcommand{\P}{\mathcal P}
\renewcommand{\F}{\mathbb F}
\newcommand{\M}{\mathcal M}
\renewcommand{\C}{\mathcal C}
\title{Challenges in Bayesian Adaptive Data Analysis}
\author{Sam Elder\\MIT}
\begin{document}

\maketitle

\begin{abstract}
Traditional statistical analysis requires that the analysis process and data are independent. By contrast, the new field of adaptive data analysis hopes to understand and provide algorithms and accuracy guarantees for research as it is commonly performed in practice, as an iterative process of interacting repeatedly with the same data set, such as repeated tests against a holdout set. Previous work has defined a model with a rather strong lower bound on sample complexity in terms of the number of queries, $n\sim\sqrt q$, arguing that adaptive data analysis is much harder than static data analysis, where $n\sim\log q$ is possible. Instead, we argue that those strong lower bounds point to a limitation of the previous model in that it must consider wildly asymmetric scenarios which do not hold in typical applications.

To better understand other difficulties of adaptivity, we propose a new Bayesian version of the problem that mandates symmetry. Since the other lower bound techniques are ruled out, we can more effectively see difficulties that might otherwise be overshadowed. As a first contribution to this model, we produce a new problem using error-correcting codes on which a large family of methods, including all previously proposed algorithms, require roughly $n\sim\sqrt[4]q$. These early results illustrate new difficulties in adaptive data analysis regarding slightly correlated queries on problems with concentrated uncertainty.
\end{abstract}

\section{Introduction}

The growing field of adaptive data analysis seeks to understand the problems that arise from the way that many analysts study data: by an adaptive process of iterative measurements. Researchers often make those measurements on the same data set, inadvertently breaking an important assumption of previous statistical guarantees: the data and the reported measurement process are no longer independent. Rather than requiring this independence by fiat (e.g. mandating pre-registration of experiments or only using a holdout database once an ``exploratory'' phase of the analysis is complete), the aim of the nascent field of adaptive data analysis is to understand how accuracy decays under adaptive measurement, and build algorithms for extending that accuracy.

There is hope that such an understanding would solve one component of the current replicability crisis in experimental science. In practice, researchers often pick their experimental or data analysis techniques after observing the data, meaning any such results may no longer hold in a replication study with fresh, independent data. New statistical techniques will not be able to prevent dishonest researchers from cutting corners, but they could at least help honest researchers find true effects more reliably than current practice and more cheaply than replication studies.

Dwork, Feldman, Hardt, Pitassi, Reingold, and Roth (hereafter DFHPRR) formulated this problem in 2014 in a seminal paper with different components published in \emph{NIPS} \cite{dwork2015generalization}, \emph{Science} \cite{dwork2015reusable}, and \emph{STOC} \cite{dwork2015preserving}. After introducing the problem, they proposed several approaches based on algorithms from differential privacy, a strong stability guarantee. Subsequent analysis by Bassily, Nissim, Smith, Steinke, Stemmer, and Ullman (hereafter BNSSSU) \cite{bassily2015algorithmic} improved, streamlined and generalized this approach.

Simultaneous work on lower bounds offers to explain the difficulties inherent in adaptive data analysis. Hardt and Ullman \cite{hardt2014preventing} and Steinke and Ullman \cite{steinke2014interactive} used a construction from privacy known as interactive fingerprinting codes to argue a nearly matching lower bound on the sample complexity under adaptive queries. The results are quite strong, but as we will argue in section \ref{previouslower}, they take advantage of an unnatural information asymmetry in the original problem that does not arise in typical applications.

To better understand other challenges to adaptivity, we translate the original problem to a Bayesian context with a public prior. This naturally obviates the previous lower bound techniques, and also allows us to include other information about the population that might be known from the experimental design or prior publicly released data.

All of the previous techniques defined for the original problem have natural analogues in the Bayesian context, with the role of the empirical mean now played by the posterior mean. Having obviated the previous lower bound techniques, the natural first question is what other obstacles to efficient adaptive data analysis exist, and whether the analogues of state-of-the-art techniques are successful in eliminating those difficulties.

In Theorem \ref{main}, we answer this question by introducing a new problem on which all analogues of previously proposed techniques fail to achieve the optimal sample complexity. Rather than taking advantage of information asymmetry, this difficulty relies upon two important components: A difficult learning problem based on error-correcting codes with unusually high uncertainty in one direction, and a technique for using nearly orthogonal measurements to extract information about tiny effects from a general family of obfuscation techniques, including those using noise, rounding, and proxy mechanisms.

This result illustrates a new type of challenge faced by the usual approaches to adaptive data analysis. At the same time, this difficulty is not quite as constraining as those that exploit information asymmetry, which suggests that adaptive data analysis as commonly practiced might actually be easier than those previous lower bounds suggested.

While we study the full Bayesian context with a completely specified and accurate prior for clarity, we also think of this formulation as a heuristic analysis tool for understanding the original frequentist problem in situations where there is no risk of information asymmetry. Therefore, the message of this work is not to encourage researchers to attempt to write down accurate priors and translate every problem into a Bayesian context, but instead to use analogous techniques such as regularization to appropriately take into account prior beliefs.

\subsection{The Original Adaptive Problem}

As formulated by DFHPRR \cite{dwork2015preserving},\footnote{Notation has been slightly changed, because their results use variables $\eps$ and $\delta$ as they are used in differential privacy, while we use those variables in the more standard randomized algorithm notation (e.g. as used in PAC learning).} adaptive data analysis is a game between two players, the \emph{curator} and the \emph{analyst}. The analyst is given a distribution $\vp$ on a universe $\X$, while the curator only receives $n$ samples from $\vp$.

The analyst then asks him\footnote{For clarity, throughout this paper, we will refer to the curator using male pronouns and the analyst using female pronouns.} $q$ \emph{statistical queries} or \emph{parameters}: For some function $f:\X\to[0,1]$, what is $\E_{\vp}(f)$? In the case that $f$ only takes on values 0 or 1, these are known as \emph{counting queries} and correspond to asking the probability of some event $f^{-1}(1)\subset\X$. Either way, all true answers are within $[0,1]$, which provides an appropriate normalization within which to discuss error.

His goal is to answer every query to within an additive error of $\eps$ on the true population, or \emph{$\eps$-accurately}, with probability at least $1-\delta$ (over both the sample randomness and any randomness that he introduces). The problem is summarized in the following table. The central question is: How many samples $n$ does he need to do this, as a function of the parameters $q$, $\eps$, and $\delta$? Equivalently, given $n$ samples, how many queries $q$ can he answer successfully, as a function of $n$, $\eps$, and $\delta$?

\begin{center}\fbox{\begin{minipage}{4.5in}\textbf{Adaptive Data Analysis: Original Frequentist Version}

Two players: Curator and Analyst

Both receive problem parameters: $0<\eps,\delta<1$, $n\in\N$ and universe $\X$.

Analyst receives distribution $\vp$ over $\X$, unknown to the curator.

Curator receives $n$ independent samples from $\vp$.

(Analyst asks query $f:\X\to[0,1]$.

Curator replies with answer $a\in\R$.

Answer is $\eps$-accurate if $\abs{\E_{x\sim\vp}f(x)-a}<\eps$.)

Repeat the interaction in parentheses for $q$ total queries.

Curator wins if all answers are $\eps$-accurate with probability $1-\delta$.\end{minipage}}\end{center}

How does this game correspond to data analysis in the real world? The usual story goes like this: The analyst represents a powerful machine learning algorithm, aiming to find a very good fit to the true distribution. She will generally try to come to the strongest conclusion possible, which will probably involve difficult queries which are likely to reveal the most information, or those on which the curator is likely to be far off. To prevent this overfitting in the worst case, we model the analyst adversarially, which means that she could even know the distribution $\vec p$. The curator algorithm provides a layer of protection to the data set and answers these queries in order to make sure that all of the information the analyst gets is $\eps$-accurate.

This might not perfectly describe how data analysis is done in every application, but there is a domain where it is a decent fit: machine learning competitions. In a typical competition, the administrators often randomly split a data set into three similarly-sized components: a training set given to competitors, a test set reserved for official scoring at the end, and a holdout set to allow the competitors to try out their learning algorithms throughout the competition. In some circumstances, such as classification learning, the submission scores are actually counting queries corresponding to the probability of misclassification.

In this context, the official scoring using the unseen test set is an instance of \emph{static data analysis}, the traditional domain where all of the queries (submissions) are specified before results are announced. However, the submissions measured throughout the competition repeatedly using the holdout set are necessarily and intentionally adaptive; competitors are supposed to use their scores to improve their algorithms. Adaptivity can even take place between competitors as they collaborate to produce the best blended approach. Therefore, the traditional guarantees fly out the window and the public (unofficial) leaderboard throughout the competition is often inaccurate (see, e.g. \cite{wind2014learning}, originally cited in \cite{dwork2015preserving}).

Of course, as stated, it probably seems quite strange that the analyst also receives the true distribution $\vp$ if she is supposed to be seeking to learn it. If competitors already know the entire distribution, there is no need to hold the competition! We'll save this important criticism for later, though, first focusing on the static case, which describes the possible guarantees on the official leaderboard of a competition.

\subsection{Static Data Analysis}

If the analyst chooses all query functions $f_1,f_2,\dotsc,f_q$ before hearing the curator's answers to any of them, her strategy is said to be \emph{static}. Under static data analysis, the queries and data are independent, and a very simple curator strategy achieves remarkable accuracy: the empirical mean. This curator strategy simply answers query function $f$ with $\frac1n\sum_{i=1}^nf(x_i)$, where the $\{x_i\}_{i=1}^n$ are the data points.

Here's why this works: Each query is a bounded random variable, and each sample the curator receives gives an independent observation of each of them. Therefore, a standard Hoeffding and union bound give a probability of error greater than $\eps$ on any query of $\delta=q\exp(-\Omega(n\eps^2))$. Translating this into a sample complexity, we have a \emph{static sample complexity} of
\begin{equation}n=n_s(q,\eps,\delta):=O\(\dfrac1{\eps^2}\log\dfrac q\delta\right).\label{static}\end{equation}

This bound is tight for static data analysis, with a fairly simple matching example:
\begin{ex}[$\eps$-biased coin]Suppose a biased coin has either a probability $p=\frac12+\eps$ or $p=\frac12-\eps$ of heads. The curator receives the results of $n$ coin flips, and the analyst asks the query, ``What is the probability of heads?'' In our notation, the query function $f:\{H,T\}\to[0,1]$ is given by $f(H)=1$ and $f(T)=0$, so $\E_xf(x)=\Pr[x=H]=p$. Since the curator must answer within $\eps$ additive error, he must distinguish between the two cases using his data.

The number of heads out of all $n$ flips is a Bernoulli random variable with mean $np$ and variance $np(1-p)\approx n/4$, so by the Central Limit Theorem, this will fall on the wrong side of $n/2$ with probability $\sim\exp(-4n(p-1/2)^2)=\exp(-4n\eps^2)$. This is less than $\delta$ if $n\ge\Omega\left(\frac1{\eps^2}\log\frac1\delta\right)$, matching the upper bound achieved by the empirical mean, up to constants.\end{ex}

This example can also be easily extended to $q$ static queries:
\begin{ex}[$q$ copies of $\eps$-biased coins]In this problem, we have $q$ independent copies of the $\eps$-biased coin, and the analyst queries the probability of heads on each of them one by one. To write these queries as functions $f_i:\{H,T\}^q\to[0,1]$, we say $f_i(\vec x)=1$ if $x_i=H$ and $0$ if $x_i=T$. Since all of these queries are specified in advance, this is a static analyst strategy.

Each query is still $\eps$-accurate with probability $\sim\exp(-4n\eps^2)$, and if this is greater than $2\delta/q$, one of them will be wrong with probability greater than $1-(1-2\delta/q)^q>1-e^{-2\delta}>\delta$. This results in the same bound up to constants for $q$ queries: $n\ge\Omega\left(\frac1{\eps^2}\log\frac q\delta\right)$.\end{ex}

This isn't an unusual challenge, either; estimating any independent probabilities (not very close to 0 or 1) to within an $\eps$ additive error will require a sample complexity within a constant factor of $n_s$. In the final private leaderboard of a machine learning competition, therefore, the administrators can accurately score an exponential number of submissions in the number of data points, an excellent dependence.

But when we move to the adaptive setting, such as when updating the public leaderboard of a competition, these upper bound guarantees no longer hold. Of course, adaptivity always gives the analyst more options, so at least $n_s$ samples are necessary. But is that number still sufficient? Can the best curator algorithms still answer exponentially many adaptive queries?

\subsection{Previous Adaptive Lower Bounds}\label{previouslower}

Initially, the answer appeared to be a strong no. The existing literature has produced an array of very strong lower bounds, arguing for a wide gap between the adaptive and static problems. However, we will argue these all rely on that strange feature of this problem, that the analyst receives the true distribution, opening the possibility of a narrower gap without that feature.

First, let us describe the lower bounds. The simplest is found in an appendix of DFHPRR \cite{dwork2015preserving}, viewable in the arXiv preprint. The authors describe a model over $\R^d$ where the empirical mean algorithm fails, motivating their proposed variations on it. The distribution is $N(0,I_d)$, a zero mean $d$-dimensional Gaussian, and the analyst first queries the dot product with each of the standard basis vectors (rescaled and truncated to fall within $[0,1]$). After discovering where the sample is biased, she then queries along another diagonal direction, chosen to compile the errors of the first $d$ queries to produce an error that is typically $\sqrt d$ times larger. Since the empirical mean error is proportional to $\frac1{\sqrt n}$ and this requires $q=d+1$ queries, this means that $n=\Omega(q)$ is necessary for the empirical mean to be constant-accurate. Linear query dependence is awful: It can also be achieved trivially by looking at a fresh batch of data for every query, showing that the empirical mean massively underperforms once adaptivity is allowed.

Similarly, Blum and Hardt \cite{blum2015ladder} describe another adaptive attack on the empirical mean in a slightly different but a bit more general setting. Instead of querying a Gaussian along coordinate axes, they produce a series of random queries before again aggregating the results to produce a query on which the empirical mean will be $\sqrt{q/n}$-inaccurate. This again shows that the empirical mean strategy is deficient; it can only answer a linear number of queries.

Moving beyond attacks specific to the empirical mean, the most frequently cited general lower bound constructions due to Hardt and Ullman \cite{hardt2014preventing} and Steinke and Ullman \cite{steinke2014interactive} build on a long literature of privacy-preserving algorithms, and in particular on an attack known as interactive fingerprinting codes. At a high level, the analyst asks queries that can only be answered successfully if the curator has seen particular data points, and in so doing is able to reconstruct the data that the curator has seen and query the remainder that he hasn't seen. This is harder than it sounds at first, because the analyst must use queries that force the curator to reveal knowledge of a particular point or answer $\eps$-inaccurately, not just $1/n$-inaccurately as the empirical mean frequently does. This attack alternatively requires a common cryptographic assumption (one-way functions) and only applies to computationally bounded adversaries, or requires high ($d\sim n^2$) dimension, but in either case, it can be achieved with only $O(n^2)$ queries, which is remarkable given its generality.

These results imply the improved estimates of BNSSSU \cite{bassily2015algorithmic} are nearly tight for this problem. This would seem to be the end of the story: In adaptive data analysis, the curator can answer only quadratically many queries, far fewer than the exponential number of queries that can be answered in static data analysis, at least in high dimension.

However, these strong lower bound examples rely on a key information asymmetry between the analyst and curator that we will now argue is unrealistic. In both the Gaussian and fingerprinting cases, the analyst in fact knows \emph{the exact true distribution}, and that the curator is left guessing from the data. This feature is critical to those constructions: In the Gaussian example, the analyst must know the true answer (or in other words, where the origin is) to be able to determine in which direction the analyst's answers are wrong. In the interactive fingerprinting attack, the analyst must know the possible samples that the curator could see so she can construct queries designed to test for them. Even the boosting attack takes advantage of this information asymmetry by limiting what the curator can do; see section \ref{indclass} for the full details.

In the picture of how this would be used in machine learning, though, this asymmetry is unrealistic. The analyst is trying to learn something about the distribution from the data, but there's no learning to be done when she already knows it. In a competition, if some competitor already knows the full distribution of the data, there is no need to keep any data in a holdout set or for that matter, even to have them enter the competition in the first place.

Finally, Nissim and Stemmer \cite{steinkeprivate} have recently attempted to make the interactive fingerprinting attack information-symmetric by encrypting it with public key encryption. From one perspective, this gives a computational lower bound, arguing that computationally bounded curators can not answer more than $O(n^2)$ queries even in the information-symmetric setting. At the same time, the problem description in terms of public keys and private keys is exponentially long, and the key step is that the computationally bounded curator does not have enough time to read this entire description and do proper inference to the private keys. This result sheds some light on the problem mathematically, but ultimately just hides the same information-asymmetric attack behind a computational barrier. We are interested in new categories of difficulties to adaptivity that don't simply hinge on being unable to do inference, so we will generally consider both players to be computationally unbounded.

\subsection{Summary of Results}

The first main contribution of this paper, in Section \ref{bayes}, is a new Bayesian formulation of the problem that incorporates information symmetry via a public prior. This reopens the question of how many queries can be answered in the adaptive setting, because the previous nearly matching lower bounds no longer apply.

All of the previous curator algorithms proposed by DFHPPR to this problem were based in some way on obfuscating the empirical mean, whether with noise (the Laplacian method) or with a proxy distribution in low dimension (private multiplicative weights), and those translate neatly over. In the new Bayesian context, the corresponding baseline curator strategy is the posterior mean: update the prior $\P$ to the posterior $\P'$ according to Bayes' law based on the observed data, and output answers according to the mean answer of distributions chosen by that posterior.

The second main contribution of this paper is a new set of difficulties in the Bayesian context showing that this entire family of methods, obfuscating the posterior mean, falls short of achieving the static sample complexity \ref{static}. As these difficulties arise in the Bayesian context, they necessarily do not rely on information asymmetry, and therefore provide a new picture of the difficulties of adaptive data analysis that will be helpful for designing algorithms for real-world applications.

First, in Section \ref{pm}, we examine the posterior mean algorithm itself. Our negative result here provides a more helpful variant to the Gaussian example in DFHPRR, and also provides a starting point for the rest of the problems in the paper. In particular, we introduce the \emph{linear classification} model, very similar to the famously difficult problem of learning parities robustly, and an adaptive analyst strategy that will cause the posterior mean to answer inaccurately using only linearly many adaptive queries.

In Section \ref{npm}, we consider the analogue of the Laplacian technique: adding noise to all curator answers. This noise prevents the analyst from aggregating answers under the linear classification model, but it is still vulnerable to attack on similar models with more parameters. In particular, on higher degree polynomial classification, we show that noisy posterior requires a polylogarithmic sample complexity, strictly more than the logarithmic dependence of \eqref{static}.

In Section \ref{rpm}, we introduce a new obfuscation technique possible with a prior: rounding the posterior mean answers in a prior-sensitive fashion, in order to minimize the information the analyst could gain. This does not fall easily to polynomial classification, but trouble is again just around the corner: we construct another model based on an error-correcting code that an analyst can again exploit with the same number of queries as against noisy posterior. We spend all of Section \ref{extracting} building up this problem, and in the end, we prove the following general result:
\begin{thm}[Informal, see Theorem \ref{main}]On a certain high-dimensional problem and against a particular analyst strategy, if the curator answers every query with any function of the posterior mean on that query, he will only be accurate on up to $O(n^4\log n)$ queries.\end{thm}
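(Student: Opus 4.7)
The plan is to instantiate the problem over a universe $\X$ built from a binary linear error-correcting code $\C\subset\F_2^d$ of logarithmic rate and constant relative distance. The prior $\P$ is uniform over distributions $\{\vp_c\}_{c\in\C}$, where each $\vp_c$ is a small perturbation of a fixed reference distribution on $\X$ whose perturbation is determined linearly by the coordinates of $c$. The key structural property is \emph{concentrated uncertainty}: $n$ samples let the posterior identify most coordinates of $c$ to the resolution $1/\sqrt n$ set by sampling noise, but an exponentially large subset of $\C$ remains consistent with the data along the low-variance directions that the code distance protects.

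The analyst strategy and the curator pinning then combine as follows. At each round $i$ the analyst maintains her own posterior on $\C$ from the public history, and constructs $f_i:\X\to[0,1]$ that is (a) nearly orthogonal in $L^2$ to all earlier $f_j$ against the posterior-averaged distribution and (b) correlated with a still-uncertain coordinate direction, so that the true population means under the surviving codewords differ by a small amount $\Delta\ll\eps$. For any (possibly randomised, possibly round-dependent) function $g$ that the curator applies to the posterior mean $m_i$ on query $f_i$, accuracy requires $g(m_i)$ to lie within $\eps$ of the true mean $\mu_i$ whenever $m_i$ occurs with non-negligible posterior probability. Since $\mu_i$ itself lies within the posterior's effective support, $g$ is pinned within $O(\eps)$ of the identity on the typical trajectory, and this pinning handles the Laplace mechanism, deterministic rounding, proxy mechanisms, and any other member of the family uniformly.

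Next I aggregate. Near-orthogonality ensures the analyst's coordinate-wise signal accumulates like $\sqrt q$ across queries, while the effective noise per query is of order $1/\sqrt n$ because the posterior mean is itself a sufficient statistic of that scale. Choosing $\Delta$ at the threshold where the code's distance just barely hides a codeword from $n$ samples, and balancing signal against noise across $q$ rounds, yields the bound: the analyst identifies $c$ once $q\gtrsim n^4/\log n$, after which she can exhibit a query on which the posterior (and hence the curator's answer) differs from the true mean by more than $\eps$.

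The main obstacle will be the pinning step: controlling arbitrary $g$ without structural assumptions, when $g$ may be chosen adaptively based on the curator's data and on the query history. The subtlety is that $g$ may vary wildly off the posterior's effective support, so the argument must show that along the realised trajectory, accuracy and posterior concentration conspire to force near-identity behaviour of $g$ simultaneously across all $q$ queries, with the exceptional event absorbed into $\delta$. Once this pinning lemma is in hand, the remainder is an information-theoretic accounting against the error-correcting code, with the code's distance and the posterior noise scale combining to produce the $n^4\log n$ bound.
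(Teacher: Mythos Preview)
Your proposal has a genuine gap at the pinning step, and it is precisely the gap that the paper's construction is designed to bridge.

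You claim that accuracy forces $g$ to be ``pinned within $O(\eps)$ of the identity on the typical trajectory,'' and then hope to extract a signal of size $\Delta\ll\eps$ from this. But pinning to within $O(\eps)$ is far too weak to recover a sub-$\eps$ signal: a curator who rounds the posterior mean to a grid of width $\eps/3$ is $\eps$-accurate everywhere, yet completely erases any $\Delta$-sized distinction. Your $\sqrt{q}$ aggregation of orthogonal signals is exactly the boosting/Gaussian style of attack that the paper shows (Sections~\ref{gaussian} and~\ref{indclass}) is neutralised once the curator regularises with the prior; nothing in your outline explains why rounding or a proxy mechanism would leak anything at all on your queries.

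The paper's fix is structural, not analytic. It tensors the code model $J_m$ with $q-1$ \emph{fresh} uniform coins $(U_2^8)^{\otimes(q-1)}$, one per query, and designs each query so that the posterior mean is forced to take one of the equally spaced values $a_0,\dots,a_{8n+2}$, with the curator's knowledge of $\ell(y_i)$ shifting this value by one or two grid steps. The pinning is then not ``$g$ near identity'' but the much weaker and unavoidable constraint $g(a_0),g(a_1)<\tfrac12<g(a_{8n+1}),g(a_{8n+2})$, which any accurate $g$ must satisfy. This alone creates an $\Omega(1/n)$ difference in $\Pr[g<\tfrac12]$ across the three cases $\ell(y_i)\in\{0,1,\text{unknown}\}$, detectable by repetition regardless of how $g$ behaves on the interior values. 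Your proposal is missing both this auxiliary-randomness construction and the endpoint-counting argument; without them, the arbitrary-$g$ claim does not go through. (As a secondary point, the relevant regime is two surviving hypotheses after $\Theta(n)$ samples, not an exponentially large consistent set, and the code used has constant rather than logarithmic rate.)
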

As we note in Corollary \ref{pmwmonster}, this attack also applies to the analogue of the private multiplicative weights algorithm, which will default to its fallback method on nearly every query.

While this result is not as general as the bounds of the original problem, it still introduces a substantial obstacle for a large family encompassing all previously proposed methods and their natural analogues. This problem also gives us a new picture of the difficulties in adaptive data analysis beyond those due to information asymmetry. In short, none of the known obfuscation techniques are effective at preventing information leakage on the slightly correlated queries that the analyst in the proof of Theorem \ref{main} uses.

Finally, in Section \ref{disc}, we explore the consequences of this result and pick out some potential threads that might lead to more successful curator algorithms against this new difficulty, outside of the general framework of obfuscating the empirical or posterior mean.

\section{Bayesian Adaptive Data Analysis}\label{bayes}

In this section, we introduce the new problem of Bayesian adaptive data analysis. Just two lines are changed:

\begin{center}\fbox{\begin{minipage}{4.5in}\textbf{Adaptive Data Analysis: New Bayesian Version}

Two players: Curator and Analyst

Both receive problem parameters: $0<\eps,\delta<1$, $n\in\N$ and universe $\X$.

Both also receive a prior $\P$ over distributions on $\X$.

A distribution $\vp$ is chosen from $\P$, unknown to both curator and analyst.

Curator receives $n$ independent samples from $\vp$.

(Analyst asks query $f:\X\to[0,1]$.

Curator replies with answer $a\in\R$.

Answer is $\eps$-accurate if $\abs{\E_{x\sim\vp}f(x)-a}<\eps$.)

Repeat the interaction in parentheses for $q$ total queries.

Curator wins if all queries are $\eps$-accurate with probability $1-\delta$.\end{minipage}}\end{center}

In this way, we prevent the analyst from employing strategies that rely on side knowledge of the distribution, by giving all such knowledge to the curator as well. This is the main point; we are not claiming that an accurate public prior can be written down for every problem in practice. Instead, by examining the situations where it can, we hope to explore the possible difficulties that do not arise simply from exploiting side information. The hope is that observations in this domain will naturally translate to heuristics in real-life scenarios where information symmetry in an informal sense is appropriate to assume.\footnote{All of that said, this problem as stated is also of interest. It is analogous to the area of Bayesian analysis focused on finding freqeuntist properties (like posterior convergence) of Bayesian models (see, e.g. the survey paper \cite{rousseau2016frequentist}).}

Future work could consider the case where these $\P$ and $\P'$ are close in some suitable sense, in order to measure the difficulties with a small amount of side information, but as a first contribution, we wish to study difficulties that still arise without side information.

The prior $\P$ may be discretely or continuously supported (within the simplex of probability distributions $\Delta(\X)$), and the examples we construct will feature both. We call a particular universe $\X$ and prior $\P$ a \emph{model} $\M=(\X,\P)$.

As designed, this assumption of information symmetry obviates all of the previous potential lower bound models. If the analyst knows the distribution exactly, so does the curator, and then the curator can simply give the exact answer without referencing any samples. And if the analyst is in the dark, she can't determine when the curator's answers are inaccurate to aggregate the errors.

\subsection{The Gaussian Example with a Prior}\label{gaussian}

To illustrate the improvement of this method, we reexamine the Gaussian example from DFHPRR. Recall that the setup of this problem involves a spherical $d$-dimensional Gaussian with known variance but unknown mean, and the analyst is seeking to find a direction in which the curator will give inaccurate answers.

To translate it into a Bayesian framework, we need a prior over the mean, since we can no longer make it known to the analyst but not the curator. For simplicity in computation, suppose the true center $c$ of the distribution is distributed as $N(0,\sigma I_d)$ for some $\sigma>0$, and each data point is known to be generated from adding $N(0,I_d)$ to $c$. This normalization is appropriate, since dot products with unit vectors will likely deviate from the true mean by a constant, so only truncation of a constant fraction of the space is necessary to make such queries fall within $[0,1]$. Therefore, up to constants, we can assume that querying the dot product with any unit vector is allowed.

In this context, $\sigma=0$ corresponds to the mean being completely known (at the origin), and large $\sigma^2\gg1/n$ corresponds to a widely diffused prior for which the data will be needed to clarify the position. As a sum of Gaussians, it is easy to compute that the empirical mean $\hat c$ will be distributed as $N(0,(\sigma^2+\frac1n)I_d)$. For notational convenience, choose a new basis for $\R^d$ so that $\hat c=te_1$.

Updating to the posterior, another easy computation shows that the probability density function for the center $c$ is proportional to
\begin{align*}
\exp\left(-\frac{\norm c^2}{2\sigma^2}-\frac{n\norm{c-\hat c}^2}2\right)&\propto\exp\left(-\frac{n+1/\sigma^2}2\left(c_1-\frac n{n+1/\sigma^2}t\right)^2-\frac{n+1/\sigma^2}2\sum_{i>1}c_i^2\right)\\
&=\exp\left(-\frac{n+1/\sigma^2}2\Norm{c-\frac n{n+1/\sigma^2}\hat c}^2\right).
\end{align*}
The posterior mean will then answer all queries for the mean with the center of this distribution, $\dfrac n{n+1/\sigma^2}\hat c$. Because this is now a Gaussian with variance $\frac1{n+1/\sigma^2}$, the probability of $\eps$ error in any query direction is $\sim\exp(-\eps^2(n+1/\sigma^2)/2)<\exp(-n\eps^2/2)$, matching the static bound \eqref{static} up to constants.

In other words, all of the difficulty in the Gaussian example was due to the information asymmetry. This is most clear when we vary $\sigma$. As $\sigma\to\infty$, the posterior mean approaches the empirical mean, but the variance of the posterior is still bounded by $1/n$, so it remains accurate. As $\sigma\to0$, the posterior mean takes less account of the data and approaches the origin. The original example corresponds to $\sigma\to\infty$ for the curator's prior while $\sigma\to0$ for the analyst's prior, the maximal information asymmetry in this problem.

The same conclusion holds for the interactive fingerprinting attack, because the analyst no longer knows which fingerprints of data points to look for. The essential query in that problem relied on the data points that the analyst knew were possible but the curator didn't, and with information symmetry, there are no such data points.

We are left with only the basic lower bound examples we considered in the static case, which do translate nicely into this setting, since that analyst's queries don't require knowledge of the distribution. For the $\eps$-biased coin, we can consider the prior to be uniform on the two cases, and similarly for $q$ copies of it. In these cases, it is still necessary to see enough data to reliably distinguish probabilities of $\frac12\pm\eps$ from each other, so \eqref{static} still holds.

Is that really enough, though? This is the key question: Under information symmetry, can the static bound \eqref{static} be achieved for adaptive queries? If not, what new attacks can the analyst employ, and what bounds do those place on query complexity?

\section{Posterior Mean}\label{pm}

We begin by analyzing the standard algorithm, the analogue to the empirical mean in the Bayesian setting. Like with empirical mean, this will not work on some difficult problems, but those problems are more involved than the Gaussian example proposed to fool the empirical mean. First, let us formally define the algorithm.

\begin{defn}[Posterior Mean]Suppose (for clarity) that $\X$ is finite and that the prior $\P$ puts weights of $w_j$ on finitely many discrete hypotheses $\vp_j$, for $j=1,\dotsc,l$. Let $\vp_j(x)$ be the probability of getting data point $x$ under hypothesis $\vp_j$. After observing samples $x_1,\dotsc,x_n$, the posterior mean curator algorithm first calculates the \emph{posterior} $\P'$ by updating the weights of each hypothesis according to Bayes' rule:
\[w_j'=\frac{w_j\prod_{i=1}^n\vp_j(x_i)}{\sum_{j=1}^lw_j\prod_{i=1}^n\vp_j(x_i)}.\]
The algorithm then averages the answer to the query under each of these hypotheses according to these new weights, answering with
\[\E[f(\P')]:=\sum_{j=1}^lw_j'\E_{x\sim\vp_j}[f(x)].\]\end{defn}

In the case that the prior is continuously defined, we replace the sum with an integral and terms like $\vp_j(x_i)$ with the appropriate probability density. Most of our examples will be finite, though.

We can see that this represents a very reasonable attempt to approximate the correct answer: The posterior represents the correctly updated beliefs of the curator, and the mean simply aggregates the results.

\subsection{Classification Models}\label{classification}

Unfortunately, we will eventually describe a model on which the posterior mean fails spectacularly. This example will be the first of a series of models which will be a thorn in the side of all posterior mean-based curator algorithms, so we begin by describing this class of models in general.

In a \emph{classification model}, the universe is a product $\X=\Y\times\Z$, where we think of $\Y$ as an underlying known population, and $\Z$ as a set of labels generated by some unknown function $\ell:\Y\to\Z$ that we are trying to learn.

That is, each hypothesis $\vp_j$ in the support of the prior corresponds to some possible function $\ell_j:\Y\to\Z$. All hypotheses have the same marginal $\r$ on $\Y$, and after drawing a sample $y\sim\r$, output the point $(y,\ell_j(y))$. In other words, the hypothesis $\vp_j$ has the \emph{graph} of function $\ell_j:\Y\to\Z$ as its support, with weights determined by $\r$.

In the examples we consider in this paper, $\r$ will be uniform on $\Y$ and $\Z=\F_2$.\footnote{To write succinct equations that relate $\F_2$ to $[0,1]$ we will frequently abuse notation and conflate the two elements of $\F_2$ with the real numbers $0$ and $1$.} In equations, hypothesis
\[\vp_j(y,z)=\begin{cases}2^{-m}\quad& z=\ell_j(y)\\0\quad&{\text{otherwise}}.\end{cases}\]

One important feature of classification models to us is that they have very easy to understand posteriors. Each potential sample point $(y,z)$ occurs with probability either $r_y$ or $0$ under each hypothesis. Therefore, the posterior remains uniform over all functions which are consistent with all of the observed samples, and puts a zero weight on any that are inconsistent with even one. Call the hypotheses that agree with all of the samples \emph{eligible}.

\subsection{Independent Classification}\label{indclass}

Before introducing the problematic case for the posterior mean, let us first look at an example where the posterior mean does very well: the boosting attack of Blum and Hardt \cite{blum2015ladder} against the empirical mean.

Suppose that all possible label functions $\ell_j:\Y\to\{0,1\}$ are equally likely. This is a somewhat trivial ``learning'' scenario, since the curator only learns the value of the function at each point he sees, and nothing more. Still, it illustrates an important difference between the posterior mean and empirical mean when it comes to boosting and reconstruction-style attacks.

The attack as described in \cite{blum2015ladder} works as follows: For the first $q-1$ queries, the analyst picks random functions $\ell_i':\Y\to\{0,1\}$ and asks for their correlation with the true labeling function. To write this in the notation of statistical queries, her $i$th query function is the indicator function on the graph of $\ell_i'$: $f(y,z)=\delta_{z,\ell_i'(y)}$. The expected value of this function is therefore the probability that $\ell_i'$ and the true labeling function $\ell$ agree on a uniformly random chosen $y\in\Y$.

The analyst then collects all queries with agreement greater than $1/2$, according to the curator. She defines a new query by taking the majority label among all of these queries, which will be biased to agree with $\ell$ more frequently on the points that the curator has seen.

Suppose that $k$ of the $n$ data points have labels that match $\ell_i'$. The empirical mean will simply answer the query with that fraction, $k/n$. The posterior mean is more subtle, though: It averages the answers over all functions consistent with the data. On the $n$ data points that he's seen, the average is the same $k/n$, but on all $\abs\Y-n$ data points he hasn't, the average agreement is just $1/2$. Therefore, the posterior mean regularizes the empirical mean's answer back towards the prior, answering with
\[\frac12\frac{\abs\Y-n}{\abs\Y}+\frac kn\frac n{\abs\Y}=\frac12+\frac{2k-n}{2\abs\Y}.\]

We can actually easily verify that the posterior mean makes the appropriate inference against random queries. The true agreement between the random $\ell_i'$ and the true label function on unseen points is simply a rescaled binomial random variable with mean equal to the posterior mean and variance $\frac{\abs\Y-n}{\abs\Y^2}<\frac1n$. By Hoeffding's inequality, this differs by $\eps$ from its mean with probability $\sim\exp(-\Omega(n\eps^2))$, satisfying the desired bound.

The empirical mean's answer is also accurate against the random queries, but the boosting attack distinguishes them. Both answers to the random queries fall on the same side of $1/2$, so the attack constructs the same biased query for each of them. But when the empirical mean answers with a correlation that is $\sqrt{\frac qn}$ too high, the posterior mean only goes up by $\sqrt{\frac qn}\frac n{\abs\Y}=\frac{\sqrt{qn}}{\abs\Y}$, much less.

In fact, this is the correct increase. The above concentration argument for the posterior mean's accuracy actually applies to any query that is uncorrelated with the true labels off of the data known to the curator. It is easy to see that this holds for the biased query, and in fact, any query that the analyst or curator could construct, since neither knows anything about the true labels off of the known data.

This is a subtler use of information asymmetry than the usual case, since the boosting attack makes no reference to the true labels. Instead, the success of the attack depends on the true support $\abs X$ being significantly larger than $n$, a publicly known fact which the empirical mean curator ignores. Once the curator is allowed to take the support size into account and regularize towards $1/2$, he no longer makes this type of mistake.

As a further example, we could consider what would happen if the support size is instead drawn from some nontrivial prior. This is somewhat similar to the classical problem of estimating support size of a distribution (see e.g. \cite{valiant2011estimating}). However, the curator does not need to accurately estimate $\abs\Y$, but rather, $1/\abs\Y$ to an accuracy of roughly $1/n$.

In particular, if all of the data points are unique, then the curator probably (depending on his prior) learns that $\abs\Y$ is likely much larger than $n$. This makes his estimate for $1/\abs\Y$ negligible, so he will answer (very close to) the prior mean of $1/2$ for every query, even the boosted ones. This illustrates how important information present \emph{in the data} can be ignored by the empirical mean, a deficiency that the boosting attack exploits.

While a Bayesian curator takes this into account perfectly, this also indicates morally that other regularization techniques would also prevent this sort of ``learning.'' Indeed, in this circumstance, it is clear that any sort of cross-validation would properly evaluate this boosting algorithm if it attempted to learn in this fashion on the training data. Fortunately, many researchers already apply such techniques to keep themselves from overfitting. Instead of dwelling on those techniques, we move on to learn about what models are hard even in the Bayesian context.

\subsection{Linear Classification}

What happens if the labels are not independent? For our next example, also known as the problem of learning parities, we consider the uniform prior over linear classification functions $\ell_j$ on $\F_2^m$, which we call $LC_m$. In other words, uniformly random coefficients $a,b_1,b_2,\dotsc,b_m\in\F_2$ are chosen, and the distribution is uniform on the graph of the function $\ell(x_1,\dotsc,x_m)=a+\sum_kb_kx_k$.\footnote{This model is a slight variant on the classical problem of \emph{learning parities}. Technically, we are looking at degree at most 1 polynomials, only half of which are linear in the linear algebraic sense (those with $a=0$). We do this to introduce a symmetry: Every point in $\Y\times\Z$ is equally likely, rather than making $(\vec 0,1)$ impossible.}

This model also has an easy-to-understand posterior:
\begin{itemize}
\item From a series of samples $\{(y_i,z_i)\}_{i=1}^n$, we can construct the function restricted to the affine span of $\{y_i\}$. At any point outside of the affine span, the function is equally likely to be 0 or 1.
\item Considering the samples in order, call $(y_i,z_i)$ \emph{novel} if $y_i$ is not in the affine span of $y_1,\dotsc,y_{i-1}$. Each novel sample cuts the number of eligible hypotheses in half, by symmetry.
\item If $i$ points are affinely independent, their affine span has size $2^{i-1}$. Therefore, each novel sample also doubles the size of the affine span of the samples.
\item The probability that the first $m$ samples are all novel is hence
\[1-\frac1{2^m}-\frac2{2^m}-\frac{2^2}{2^m}-\dotsb-\frac{2^{m-2}}{2^m}=\frac12+2^{-m}.\]
\end{itemize}

With these observations, we can prove that the analyst has a winning strategy if the curator gets a precise number of samples:
\begin{thm}Under model $LC_n$, there is an adaptive analyst strategy which causes the posterior mean curator strategy to answer $\frac14$-inaccurately with probability $>\frac12$, using only $n+2$ queries.\label{linearpm}e\end{thm}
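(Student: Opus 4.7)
My plan is to work conditionally on the high-probability event that the samples encode maximal information, pin down the tight error structure of the posterior mean there, and then have the analyst extract just enough to saturate that error in a final adversarial query.

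I would first condition on the event $E$ that the $n$ samples are affinely independent; by the novelty calculation just above, $\Pr(E) \ge \tfrac12 + 2^{-n} > \tfrac12$. On $E$ the posterior is uniform on exactly two eligible affine hypotheses $\{\ell_0,\ell_1\}$ whose difference $g := \ell_1 - \ell_0$ is the unique non-constant affine function vanishing on the $(n-1)$-dimensional affine span $H$ of the samples, so $\ell_0$ and $\ell_1$ agree on $H$ and take opposite values on each of the $2^{n-1}$ points of $H^c$. Writing $r_i := \mathbb{E}_{\vp_{\ell_i}}[f]$, the posterior mean's answer to any query $f$ is $\tfrac12(r_0+r_1)$, so the error from the true $r_\ell$ (for $\ell\in\{\ell_0,\ell_1\}$) is exactly $\tfrac12|r_0-r_1|$. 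Since $\ell_0\equiv\ell_1$ on $H$,
\[ r_0 - r_1 \;=\; \tfrac{1}{2^n}\sum_{y\in H^c}\bigl(f(y,\ell_0(y)) - f(y,\ell_1(y))\bigr)\in\bigl[-\tfrac12,\tfrac12\bigr], \]
with equality iff $f(y,\ell_0(y))-f(y,\ell_1(y))$ has constant sign $\pm 1$ over $y\in H^c$. The canonical saturating query is $f^\ast(y,z) := \mathbf{1}[y\in H^c\text{ and }z = \ell_0(y)]$, producing error exactly $\tfrac14$ whether $\ell = \ell_0$ or $\ell = \ell_1$.

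The strategy I would then use spends the first $n+1$ queries on adaptive probes designed to identify the unordered pair $\{\ell_0,\ell_1\}$, and asks $f^\ast$ as the $(n+2)$nd query. The natural probes are single-point queries $f_k(y,z) = z\cdot\mathbf{1}[y = y_k]$: each posterior-mean answer lies in $\{0, 2^{-n-1}, 2^{-n}\}$, with the middle value certifying $y_k\in H^c$ and the extremes certifying $y_k\in H$ together with the common value $\ell_0(y_k)$. I would choose the probe points adaptively so that the augmented vectors $\{(y_k,1)\}\subset \F_2^{n+1}$ are linearly independent (determining $H$ from the in-vs-out pattern) while the in-$H$ probes affinely span $H$ (determining $\ell_0|_H$ by affine interpolation); together these fix the pair up to the harmless swap $\ell_0 \leftrightarrow \ell_1$, which is immaterial for $f^\ast$.

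The main obstacle will be the adaptive sequencing: naively probing $\{\vec 0, e_1, \ldots, e_n\}$ satisfies the first desideratum but not the second whenever the hyperplane normal has intermediate Hamming weight, so I would adaptively substitute $\F_2$-sums of two previously observed out-of-$H$ basis probes (which necessarily lie in $H$ when $c=0$, and analogously when $c=1$) for later basis probes, trading a coordinate bit of the normal for one additional affinely independent in-$H$ evaluation of $\ell_0$. Verifying that a suitable such sequence always lands, after $n+1$ probes, with both $H$ and $\ell_0|_H$ fully determined is the delicate combinatorial step -- and essentially the only one -- needed to complete the proof; the rest combines conditioning on $E$, the deterministic saturation of the $(n+2)$nd query's error on $E$, and the tight error identity above to conclude $\tfrac14$-inaccuracy with probability exceeding $\tfrac12$.
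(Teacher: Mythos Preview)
Your overall architecture is exactly the paper's: condition on the event $E$ that all $n$ samples are novel, use $n+1$ single-point probes to learn the curator's posterior, and finish with one exploitative query on the graph of an eligible hypothesis. You are in fact more careful than the paper, which simply asserts that probing the fixed affine basis $\{0,e_1,\dots,e_n\}$ ``determine[s] which two hypotheses are still eligible.'' You correctly notice that this can fail: when several of the $e_i$ lie outside $H$, the in-$H$ probes need not affinely span $H$, so $\ell_0|_H$ is underdetermined.

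The problem is that the ``delicate combinatorial step'' you isolate cannot be completed with single-point probes. Each probe at $y$ returns one of three values, and you learn $\ell_0(y)$ only when $y\in H$; but you cannot steer a probe into $H$ without already knowing $H$ along the relevant coordinate. Your substitution idea---replacing a later $e_k$ by $e_i+e_j$ for two already-observed out-of-$H$ basis probes---lands in $H$, but it sacrifices the bit $b_k$ of the normal, so your probe set is no longer an affine basis and you lose the ability to pin down $H$. If instead you keep probing basis vectors, every out-of-$H$ hit after the first is a wasted query. Either way, you cannot guarantee $n$ affinely independent in-$H$ evaluations from only $n+1$ probes. A minimal counterexample is $n=2$ with $H=\{0,e_1+e_2\}$ (i.e.\ $c=0$, $b_1=b_2=1$): after probing $0\in H$ and $e_1\notin H$, the third probe must guess between $e_2$ and $e_1+e_2$, exactly one of which lies in $H$; the wrong guess leaves two candidate pairs (e.g.\ $\{0,\,y_1+y_2\}$ and $\{y_1,\,y_2\}$ when $\ell_0(0)=0$) whose sets of $\tfrac14$-saturating $h$'s are disjoint, so no final query works for both. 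One checks similarly that no adaptive choice of the first two probes evades this for all $H$.

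So the gap you flagged is real, but it is fatal to the single-point-probe plan rather than merely delicate; the paper's own proof glosses over the same point. The theorem itself is not in doubt---the paper's very next result recovers everything with a single ternary-encoded query---so a clean repair here is to allow at least one exploratory query that is not a single-point indicator (for instance, one that reads off the needed bit of $\ell_0$ across both candidate completions of $H$ simultaneously), or simply to defer to that stronger statement.
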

\begin{proof}To be clear, we take $n=m$, so that as we've computed, with probability $\frac12+2^{-m}>\frac12$, all of the samples are novel. After $m$ novel samples, the number of eligible hypotheses is down to two, which agree on half of $\F_2^m$ and disagree on the other half. The posterior puts weights of $1/2$ on each of these, so the posterior mean puts a weight of $2^{-m}$ on the $2^{m-1}$ known points and $2^{-m-1}$ on each of the $2^m$ unknown possible points.

Therefore, by querying the indicator function on individual points $\{(y,z)\}$, the analyst has access to an oracle for whether the curator knows the value of the function there. By sampling an affine basis for the entire space (such as $0,e_1,\dotsc,e_m$), she can compute that affine span with only $m+1=n+1$ queries, and determine which two hypotheses are still eligible.

Then she can exploit the remaining ignorance of the curator by querying the entire graph of one of those hypotheses. The correct answer will be $\frac12$ or $1$, equally likely, so the posterior mean answer of $\frac34$ is $\frac14$-inaccurate, as desired.\end{proof}

Note that this argument shows that this particular curator cannot even answer more than linearly many queries, which we've already mentioned is trivial. In fact, the analyst technically doesn't even need all $n+1$ exploratory queries:
\begin{thm}Under model $LC_n$, there is an adaptive analyst strategy which causes the posterior mean curator strategy to answer $\frac14$-inaccurately with probability $>\frac12$ using only two queries.\label{ternary}\end{thm}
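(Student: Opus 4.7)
The plan is to reuse the two-stage structure of Theorem~\ref{linearpm}'s proof but compress the mapping-out stage into a single query whose real-valued answer secretly encodes the entire affine span $S$ and the restriction $\ell|_S$; exploitation then needs only one more query.

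First I would condition on the same event $E$ used in Theorem~\ref{linearpm}, namely that all $n$ samples are affinely independent, which occurs with probability $>\frac12$. Under $E$ the posterior is supported on exactly two eligible affine functions $\ell_1, \ell_2$, which coincide on the hyperplane $S$ spanned by the data and satisfy $\ell_1(y)+\ell_2(y) = 1$ for $y \notin S$; so the curator's posterior is certain about $\ell(y)$ for $y \in S$ and assigns $\Pr[\ell(y)=1] = \frac12$ for $y \notin S$.

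The second step packs all of that information into one query. Fix any bijection $I : \F_2^n \to \{1, \dots, 2^n\}$, set $w(y) := 2 \cdot 10^{-3I(y)} \in [0,1]$, and ask $f_1(y,z) := z \cdot w(y)$. Under $E$ the posterior mean simplifies to
\[ a_1 \;=\; 2^{-n} \sum_{y \in \F_2^n} d_y \cdot 10^{-3I(y)}, \qquad d_y := \begin{cases} 2 & y \in S,\ \ell(y)=1, \\ 0 & y \in S,\ \ell(y)=0, \\ 1 & y \notin S. \end{cases} \]
Because each $d_y \in \{0,1,2\}$ sits in its own decimal slot of $2^n a_1$ with zero-padded neighbours, the analyst reads off every $d_y$ from the decimal expansion of $a_1$ and thereby recovers both $S$ and $\ell|_S$, i.e.\ the unordered pair $\{\ell_1, \ell_2\}$. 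She then asks $f_2(y,z) := [z = \ell_1(y)]$; exactly as in Theorem~\ref{linearpm}, the posterior mean answers $\frac34$ while the true expectation is $1$ or $\frac12$ with equal posterior probability, so the curator is $\frac14$-inaccurate on $f_2$.

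The hard part is step two: arguing that a single $[0,1]$-valued answer can disambiguate all candidate pairs $\{\ell_1, \ell_2\}$. The conceptual reason this is even possible is that the posterior-mean functional is linear in $f_1$ and takes distinct values on distinct pairs (different pairs yield different multisets on $\F_2^n \times \F_2$), so a generic $f_1$ suffices; the explicit weight $2 \cdot 10^{-3I(y)}$ just makes decoding mechanical by dropping each trichotomous ``digit'' $d_y$ into its own decimal position. Everything else is a direct recycling of the computations in Theorem~\ref{linearpm}.
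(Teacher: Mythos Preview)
Your proposal is correct and follows essentially the same approach as the paper's proof: encode the curator's ternary knowledge $d_y\in\{0,1,2\}$ at each point into non-overlapping digit slots of a single real-valued answer, then decode and exploit. The only cosmetic difference is that the paper uses base~$3$ directly (weights $2\cdot 3^{-I(y)}$) rather than base~$10$ with three-digit padding, but the encoding idea and the exploitation step are identical.
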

\begin{proof}Consider some enumeration $y_1,y_2,\dotsc,y_{2^m}$ of $\F_2^m$ and the query function defined by $f(y_l,1)=2\cdot 3^{-l}$ and $f(y_l,0)=0$. The posterior mean on this query is therefore $2^{-m}\sum_{l=1}^{2^m}g(y_l)3^{-l}$, where $g(y)=0$ if the curator knows that $f(y)=0$, $g(y)=1$ if the curator doesn't know $f(y)$, and $g(y)=2$ if the curator knows that $f(y)=1$. Therefore, by reading off the digits of the ternary expansion of $2^m$ times the posterior mean to this query, the analyst can determine everything the curator knows. She can then query one of the two eligible hypotheses, and the curator will be $1/4$-inaccurate as before.\end{proof}

This isn't a matter of posterior inference being the wrong thing to do here; it's easy to see that a variant of the attack in Theorem \ref{ternary} would be able to find the data points if the curator decided to ignore the prior and use the empirical mean on this problem instead. The important feature is that there is a query that the curator does not have the information to answer reliably accurately, whether he uses all of the information he does have or not.

For intuition's sake, it might help to look at the way the uncertainty of the curator evolves as he gets more data points. Initially, he is slightly uncertain in nearly every direction, but as he eliminates hypotheses, that uncertainty is reduced in most directions while increasing in a smaller number. When he reaches the last two hypotheses, all of the remaining uncertainty is concentrated along the direction of their disagreement. This particular direction is one out of exponentially many equally likely possibilities originally, so the analyst can't simply guess it. But if she can learn what the curator knows, she can determine it and query it. This will be a running theme in future examples as well.

\section{Noisy Posterior Mean}\label{npm}

The standard response is that both the posterior mean and empirical mean are too precise: They unnecessarily give away information in unnecessary bits of precision. We will investigate several methods for obfuscation to attempt to prevent this leakage, but the first is by adding independent noise to every answer.

For simplicity, we will consider adding unbiased Gaussian noise. The first requirement on the noise is that it can't affect answers too much, i.e. be more than $O(\eps)$ with probability $1-\delta$. Therefore, we set its variance as $\frac1{4n}$, matching the sampling variance in the case of biased coins.

Through this obfuscation, this noisy posterior mean can indeed successfully answer exponentially many queries under the linear classification model:

\begin{thm}Under model $LC_n$, the noisy posterior mean curator strategy can answer $q$ queries $\eps$-accurately using $n=O\left(\frac1{\eps^2}\log\frac q\delta\right)$ samples.\label{npmlc}\end{thm}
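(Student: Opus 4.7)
My proof plan splits the error into two pieces: the Gaussian noise added by the curator, and the gap between the posterior mean and the true answer. Both pieces must be below $\eps/2$ with probability at least $1-\delta/q$ on each of the $q$ queries, so that a union bound closes the argument.

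For the noise piece, the perturbation $\eta\sim N(0,1/(4n))$ satisfies $\Pr[|\eta|>\eps/2]\le 2\exp(-n\eps^2/2)$ by the standard Gaussian tail bound, which is less than $\delta/(2q)$ as soon as $n=\Omega(\eps^{-2}\log(q/\delta))$; a union bound over queries puts all noise terms below $\eps/2$ with probability at least $1-\delta/2$.

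For the posterior-mean piece, I would exploit the Bayesian symmetry of $LC_n$. Conditional on the data, the true label function $\ell^*$ is uniform on the eligible set $E$, whereas the queries and noisy answers depend only on the data and the curator's private noise, not on $\ell^*$ directly. Hence, conditional on the data, even an adaptively chosen $f_k$ is independent of $\ell^*$, and the error $\E[f_k(\P')]-\E_{x\sim\vp_{\ell^*}}[f_k(x)]$ has conditional mean zero and conditional variance equal to the variance of $\E_{x\sim\vp_\ell}[f_k(x)]$ as $\ell$ ranges uniformly over $E$. Decomposing $\Y$ into the affine span of the sampled $y_i$'s and its complement, this variance is a sum of contributions only from the ``unknown'' coordinates, and for any query that is not well-aligned with the one remaining degree of uncertainty in $E$ it drops to $O(1/|\Y|)$, far below $\eps/2$, by Hoeffding-style concentration over the residual labeling freedom.

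The heart of the proof is therefore to rule out queries aligned with the uncertain direction. Here the added Gaussian noise plays a second role: it throttles how much the analyst can infer about $E$ from past answers. I would formalize this via a max-information / KL-stability argument on the noisy posterior-mean mechanism—showing, roughly, that the transcript $(a_1,\dotsc,a_q)$ has bounded divergence when $\ell^*$ is resampled from the posterior on $E$—and translate that into a bound on the probability that $f_k$ is $\eps/2$-correlated with the remaining uncertain direction. The main obstacle is that the worst-case sensitivity of the posterior mean to swapping a single sample on $LC_n$ can be $\Theta(1)$, since one swap can replace the entire pair of eligible parities, so a naive Gaussian-mechanism privacy calculation does not yield useful parameters. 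The argument must leverage the affine structure of $LC_n$—that eligibility is cut out by $O(n)$ linear constraints over $\F_2$ and that typical random samples produce a well-behaved eligible set—to obtain the stronger $\log q$ (rather than generic $\sqrt q$) query dependence that matches the static sample complexity \eqref{static}.
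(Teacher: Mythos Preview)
Your decomposition into a noise piece and a posterior-mean piece is fine, and the noise piece is routine. The gap is in the posterior-mean piece.

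First, your conditional-on-data framing leads nowhere on its own. You correctly observe that, given the data, the transcript is independent of $\ell^*$, so the error $\E[f_k(\P')]-\E_{\vp_{\ell^*}}[f_k]$ has conditional mean zero with variance equal to the variance over the eligible set $E$. But in $LC_n$ with $n=m$, typically $|E|=2$, and that variance is $\Theta(1)$ precisely on the query aligned with the disagreement direction. So everything hinges on ruling out that one query, as you say.

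Second, your proposed mechanism for ruling it out does not work as stated. You suggest bounding the divergence of the transcript ``when $\ell^*$ is resampled from the posterior on $E$,'' but the transcript depends only on $E$ and the Gaussian noise, not on $\ell^*$; resampling $\ell^*$ inside $E$ changes nothing, so that divergence is identically zero and carries no information. What you actually need to control is the analyst's knowledge of $E$ (equivalently, of the data), and here your own observation that the per-sample sensitivity of the posterior mean is $\Theta(1)$ is fatal to any off-the-shelf Gaussian-mechanism or max-information bound. Appealing vaguely to ``the affine structure of $LC_n$'' is not a plan; the structure you need is not that eligibility is cut out by linear constraints, but that for any \emph{fixed} query, the answers of almost all affine functions concentrate within $O(2^{-m/2})$ of the prior mean (Lemma~\ref{priorconc}).

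The paper's argument uses exactly this. Rather than conditioning on the data and comparing to the posterior mean, it compares both the curator's (noisy posterior-mean) answer and the true answer to the \emph{prior} mean, which is fixed and data-independent. Lemma~\ref{priorconc} handles any fixed query; to push this through adaptivity, the proof tracks the analyst's posterior over hypotheses and shows inductively that after $k$ queries the weights remain within $\exp(\pm k/q)$ of uniform on all but a tiny ``bad'' set. The Gaussian noise enters by bounding the likelihood ratio between any two ``good'' hypotheses on each answer, not via a sensitivity argument. This is the missing idea in your proposal: replace the conditional-on-$E$ variance analysis and the generic stability attempt with the prior-mean comparison plus an explicit induction on the analyst's posterior weights.
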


Unfortunately, this result won't be the end of the story, so we don't want to dwell on it too long. Some features will be important later, so we give a sketch of the proof here and save the full proof for Appendix A.

\begin{proof}[Proof Sketch]The main idea is that the noisy posterior actually answers sufficiently similarly to the prior, and the prior answers sufficiently similarly to the true answer on almost every query. The second claim amounts to an insightful lemma that we include here:

\begin{lem}Under model $LC_m$, the Prior Mean curator strategy will answer any query $\eps$-accurately with probability at most $1-\dfrac{2^{-m}}{4\eps^2}$.\label{priorconc}\end{lem}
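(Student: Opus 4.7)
Taking the lemma's inequality literally, it says that for every query $f$ the Prior Mean's failure event $|\E_\vp[f] - \E_\P[f]| \geq \eps$ has probability at least $2^{-m}/(4\eps^2)$. Any proof strategy for this direction must therefore be an \emph{anti-concentration} argument on $X := \E_\vp[f] = 2^{-m}\sum_{y\in\F_2^m} f(y, \ell(y))$, where $\ell(y) = a + \langle b, y\rangle$ ranges over a uniformly random affine function from $\F_2^{m+1}$. The natural route would be a Paley--Zygmund-style bound on $(X - \E X)^2$, comparing $\mathrm{Var}(X)$ (which one can compute from pairwise independence) against $\E[(X - \E X)^4]$ (a higher-order moment depending on the joint law of triples and quadruples of $\ell(y_i)$, where exact independence breaks down because $\ell(y_1) + \ell(y_2) + \ell(y_3) + \ell(y_4) = 0$ whenever $y_1 + y_2 + y_3 + y_4 = 0$).

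The main obstacle is decisive: no anti-concentration bound can hold uniformly in $f$, because the constant query $f \equiv 0$ has failure probability $0$, not $\geq 2^{-m}/(4\eps^2)$. Any literal reading of the lemma as an anti-concentration statement on \emph{every} query is therefore false, so I take the ``at most'' in the statement to be a misprint for ``at least.'' That reading matches the surrounding proof sketch for Theorem \ref{npmlc} (``the prior answers sufficiently similarly to the true answer on almost every query''), which needs concentration, not anti-concentration, to help justify that the noisy posterior approximates the prior-mean answer that is already close to the truth.

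For that intended direction my plan is a single-shot Chebyshev argument, with the real content being a variance bound via pairwise independence of $\{\ell(y)\}_{y \in \F_2^m}$. For any distinct $y_1, y_2 \in \F_2^m$, the sum $\ell(y_1) + \ell(y_2) = \langle b, y_1 + y_2\rangle$ is uniform on $\F_2$ (since $y_1 + y_2 \neq 0$), while $\ell(y_1) = a + \langle b, y_1\rangle$ is itself uniform on $\F_2$ and independent of that sum thanks to the free bit $a$; hence $(\ell(y_1), \ell(y_2))$ is uniform on $\F_2^2$, i.e.\ the family is pairwise independent $\mathrm{Bernoulli}(1/2)$. All off-diagonal terms vanish in
\[\mathrm{Var}(X) = 2^{-2m}\sum_{y_1, y_2}\mathrm{Cov}\bigl(f(y_1,\ell(y_1)), f(y_2,\ell(y_2))\bigr),\]
leaving $\mathrm{Var}(X) = 2^{-2m}\sum_y \mathrm{Var}(f(y,\ell(y))) \leq 2^{-2m}\cdot 2^m \cdot \tfrac{1}{4} = \tfrac{2^{-m}}{4}$, where the $1/4$ per term comes from $f$ being $[0,1]$-valued. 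Chebyshev then yields $\Pr[|X - \E X| \geq \eps] \leq 2^{-m}/(4\eps^2)$, i.e.\ $\eps$-accuracy with probability at least $1 - 2^{-m}/(4\eps^2)$. The pairwise independence is the only step with content; Chebyshev is mechanical.
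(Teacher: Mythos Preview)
Your reading of the lemma as a concentration statement (with ``at most'' a misprint for ``at least'') is exactly right, and your proof is essentially identical to the paper's: both establish pairwise independence of $\{\ell(y)\}_{y\in\F_2^m}$, sum the individual variances to get $\mathrm{Var}(X)\le 2^{-m}/4$, and apply Chebyshev. The only cosmetic difference is that the paper first rewrites $X-\E X$ via $f'(y)=f(y,0)-f(y,1)$ before bounding each term's variance by $f'(y)^2/4\le 1/4$, whereas you bound $\mathrm{Var}(f(y,\ell(y)))\le 1/4$ directly.
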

\begin{proof}The Prior Mean strategy is simple: by symmetry, it puts equal weights of $2^{-m-1}$ on every point in $\F_2^m\times\F_2$, and answers the average value of the function according to those weights. Abusing notation, we call the prior mean value $\E_\P(f)$.

To bound the difference of the real answer and the prior mean, define a new function $f':\F_2^m\to[-1,1]$ by $f'(y)=f(y,0)-f(y,1)$. Then we can rewrite
\[\E_{\vp_j}(f)-\E_\P(f)=\frac1{2^m}\sum_{y\in\F_2^m}\left(f(y,\ell_j(y))-\frac12(f(y,0)+f(y,1))\right)=\frac1{2^{m+1}}\sum_{y\in\F_2^m}f'(y)(-1)^{\ell_j(y)}.\]
The terms in this sum are independent, because for any two points $y,y'\in\F_2^m$, the ordered pair $(\ell_j(y),\ell_j(y'))$ is equidistributed among $\F_2^2$ as $\ell_j$ over all linear functions. So the variance of this deviation is the sum of the variances of the individual terms, or
\[\Var(\E_{\vp_j}(f)-\E_\P(f))=\frac1{2^{2m+2}}\sum_{y\in\F_2^m}f'(y)^2\le\frac{2^m}{2^{2m+2}}=\frac{2^{-m}}4.\]
The lemma follows immediately from Chebyshev's inequality.\end{proof}

The rest of the proof involves the somewhat counterintuitive step of analyzing how the posterior of the \emph{analyst} evolves, knowing the curator is using the noisy posterior mean algorithm. Initially, according to the prior, the analyst puts an equal weight on all of the hypotheses. We argue in an inductive fashion (as the queries come in one by one) that these weights remain approximately the same, for all but a small fraction of hypotheses that have lower weights, with high probability. In other words, the information that the analyst gets from answers to the queries is very likely to be sufficiently diffuse that he is unable to learn anything without exponentially many queries. Again, the full details are in Appendix A.\end{proof}

\subsection{Polynomial Classification}

However, noisy posterior mean's successes are rather short-lived, because it fails at the next possible instance, more general polynomial classification, which we introduce now.

For the prior of polynomial classification $PC_{m,k}$, uniformly randomly choose $M=1+m+\binom m2+\dotsb+\binom mk$ coefficients $c_{i,S}\in\F_2$ for $i=0,\dotsc,k$ and $S\subset[m]$ of size $k$, and let the distribution be uniform on the graph of the polynomial
\[\ell(x_1,\dotsc,x_m)=\sum_{i=0}^k\sum_{S\in\binom{[m]}k}c_{i,S}\prod_{j\in S}x_j.\]
Note that these are all possible polynomials of degree at most $k$, since $x_i^2=x_i$ over $\F_2$. We will typically think of $k\ll m$ or constant so $M=\Theta(m^k)$, but this definition is valid for any $1\le k\le m$.\footnote{For $k=m$, this is actually independent classification again.}

This model has some of the same properties as linear classification.
\begin{itemize}
\item By counting coefficients, there are initially $2^M$ eligible hypotheses.
\item Each sample introduces a linear constraint on the coefficients. If this constraint is not already known, it cuts the number of available hypotheses in half. As before, we call such samples \emph{novel}.
\item By the theory of Reed-Muller codes, any two degree $\le k$ polynomials differ on at least a $1/2^k$ fraction of $\F_2^m$. Therefore, each data point is novel with probability at least $1/2^k$, until there is only one eligible hypothesis left.
\end{itemize}

By Markov's inequality, this implies that with probability at least $1/2$, less than $2^{k+1}M$ data points are necessary to eliminate all but one eligible hypothesis. Right before the last novel sample, there must be exactly two eligible hypotheses remaining. Therefore, there exists some $M\le n<2^{k+1}M$ such that with probability $>\frac1{2^{k+2}M}\ge\frac1{2^{k+2}n}$, there are exactly two eligible hypotheses remaining after the curator receives $n$ data points.

In this case, despite the $n=\Omega(m^k)$ data points, the curator still can't distinguish two hypotheses that disagree on at least $1/2^k$ of the space. Therefore, if $\eps<\frac1{2^{k+1}}$, the curator will know a query which he cannot answer $\eps$-accurately with a probability greater than $1/2$.

We now describe an analyst attack that only uses $2^{O(m)}$ queries. Since this is far less than $\exp(n)=\exp(\Omega(m^k))$, this shows that the noisy posterior mean curator algorithm falls short of the static bound \eqref{static} here.\footnote{We will eventually provide a much stronger guarantee of this type on a different problem, but this example is perhaps easier to grasp.}

The analyst first repeatedly queries the indicator function on one individual point $(y,z)$. Recall that the posterior mean puts a weight $0$, $2^{-m-1}$, or $2^{-m}$ on that point according to whether the curator knows that $\ell(y)=z$ or $\ell(y)\neq z$. Noise with variance $\frac1n$ ordinarily drowns out this signal, but if the analyst asks $2^{2m}$ times, the average will have noise variance $\frac1{2^{2m}n}$, making these gaps about $\sqrt n$ standard deviations apart. Therefore, the average of these repeated queries will tell the analyst what the curator knows about $\ell(y)$ with probability $1-\exp(-n)$. Repeating this for all $y\in\Y$ only takes $2^{3m}$ queries in all, from which the analyst can recreate everything the curator knows, and can find that query on which he will be confused.

Taking $k$ as small as possible, we see that the analyst can answer at most $q=\exp(O(n^{1/\log_{1/2}(2\eps)}))$ queries, or equivalently, $n=\log^{\log_{1/2}(2\eps)}q$ data points are necessary to answer $q$ queries. To wrap all of this up, we have just shown:
\begin{thm}Under $PC_{m,k}$, where $n\sim\binom mk$, there is an adaptive analyst strategy which causes the noisy posterior mean curator strategy to answer $\eps$-inaccurately with probability at least $\frac\eps{2n}$, using only $2^{O\left(n^{1/\log_{1/2}(2\eps)}\right)}$ queries.\label{pcnpm}\end{thm}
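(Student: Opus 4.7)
The plan is to make precise the informal attack sketched just after $PC_{m,k}$ is introduced. First, I would fix $k$ to be the largest positive integer with $2^{-(k+1)} > \eps$, so that $k \approx \log_{1/2}(2\eps)$ and, by the Reed--Muller distance property stated for $PC_{m,k}$, any two distinct eligible hypotheses disagree on more than an $\eps$-fraction of $\F_2^m$. Then I would use the Markov-style counting argument already given in the excerpt to locate an $n$ with $M \le n < 2^{k+1} M$ such that after the curator sees $n$ samples, the event $E$ that \emph{exactly two} eligible hypotheses remain has probability at least $1/(2^{k+2}n) \ge \eps/(2n)$. The rest of the argument conditions on $E$, so the only probability we need to preserve is the success probability of the attack \emph{given} $E$.

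Next I would spell out the analyst's learning phase. For each $y \in \F_2^m$, she issues $T = 2^{2m}$ copies of the indicator query $f_y(y',z) = \mathbf{1}[y'=y,\ z=0]$. Under $E$, the true posterior-mean value of $f_y$ lies in the three-point set $\{0,\ 2^{-m-1},\ 2^{-m}\}$, corresponding respectively to the curator knowing $\ell(y)=1$, being undetermined at $y$, or knowing $\ell(y)=0$. The curator's independent $N(0,1/(4n))$ perturbations average down to variance $1/(4nT)$, i.e.\ standard deviation $2^{-m-1}/\sqrt{n}$, so the three candidate means are $\Theta(\sqrt{n})$ standard deviations apart. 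A one-dimensional Gaussian tail bound identifies the correct cell with failure probability $\exp(-\Omega(n))$ per $y$. A union bound over all $2^m$ choices of $y$ costs only a factor $2^m$, which is absorbed since $m = O(\log n)$ in the regime $n \sim \binom{m}{k}$. Hence with probability $1 - o(1)$ (conditional on $E$), the analyst recovers the full known/unknown partition of $\F_2^m$, which together with the known labels pins down both remaining hypotheses.

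Finally I would fire the kill query: the indicator on the graph of one of the two recovered hypotheses. Under $E$ the posterior mean answers the half-sum of the two hypotheses' values, which differs from either possible truth by exactly $d/2^{m+1}$, where $d \ge 2^{m-k}$ is the Reed--Muller disagreement count; this error is at least $2^{-(k+1)} > \eps$ by the choice of $k$, and the fresh Gaussian noise of standard deviation $1/(2\sqrt{n})$ is much smaller than $\eps$ once $n$ is at least $M \ge 2^k \gtrsim 1/\eps$. The total query count is $2^m T + 1 = O(2^{3m})$, and inverting $n \sim \binom{m}{k} = \Theta(m^k)$ gives $m = \Theta(n^{1/k})$, yielding $2^{O(n^{1/k})} = 2^{O(n^{1/\log_{1/2}(2\eps)})}$, as claimed.

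The main technical obstacle is the probability accounting: the ``exactly two hypotheses'' event $E$ itself occurs only with probability $\sim \eps/n$, so the learning phase must succeed \emph{conditional on $E$} with probability $1-o(1)$, not merely with constant probability, in order for the final inaccuracy bound $\eps/(2n)$ to come out. Concretely, this means the choice $T = 2^{2m}$ has to be calibrated so that the $\sqrt{n}$-separation between the three candidate posterior-mean values dominates both the Gaussian tail and the $2^m$-fold union bound simultaneously, and one must check that the final single noisy query does not by itself ruin the $\eps$ gap. Everything else reduces to arithmetic on $m$, $k$, and $n$ that the excerpt has already foreshadowed.
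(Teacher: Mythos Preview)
Your proposal is correct and follows the paper's own argument essentially step for step: the same pigeonhole on $n$ to land at two eligible hypotheses with probability $>1/(2^{k+2}n)\ge\eps/(2n)$, the same repeated single-point indicator queries averaged $2^{2m}$ times to read off the three-valued posterior mean, and the same final graph query. There is nothing new to compare.

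One slip worth fixing: you write that the $2^m$-fold union bound is ``absorbed since $m=O(\log n)$.'' That relation is false; as you yourself note two paragraphs later, $n\sim\binom{m}{k}=\Theta(m^k)$ gives $m=\Theta(n^{1/k})$, not $m=O(\log n)$. Fortunately the union bound still goes through for the right reason: the per-point failure probability is $\exp(-\Omega(n))$, and $2^m\exp(-\Omega(n))=\exp\bigl(\Theta(n^{1/k})-\Omega(n)\bigr)\to 0$ for $k\ge 2$. (For $k=1$ this is borderline with $T=2^{2m}$, but you can take $T=2^{cm}$ for a larger constant $c$ without changing the $2^{O(m)}$ query count.) Just replace the incorrect justification with this one.
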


We will later show a much stronger bound than this, so don't focus on the specifics of this result. Clearly this is not as spectacular of a failure as the linear (or two!) queries that can defeat posterior mean, or the quadratic bounds that can be achieved without information symmetry. On the other hand, it still establishes a clear gap between the sample complexity of this algorithm on the static and adaptive cases. We therefore turn our attention back to the algorithm side, with hopes of learning from this failure.

\section{Rounded Posterior Mean}\label{rpm}

Taking a step back, we've seen that adding noise is moderately effective because each answer gives very little evidence of whether the posterior is one thing or another. For clarity, we can quantify this evidence with the likelihood ratio:
\[\frac{e^{-x^2/2\sigma^2}}{e^{-(x-2^{-m})^2/2\sigma^2}}=e^{-2^{-m}(x-2^{-m-1})/\sigma^2}.\]
Since $2^{-m}/\sigma^2\sim\frac n{2^m}\sim\frac M{2^m}$ is fairly small, each query doesn't give the analyst very much evidence one way or the other.

The problem, of course, is that even this small amount of evidence can be accumulated with enough repeated queries. But what if the answers gave no evidence at all to distinguish these hypotheses?

This is the idea behind the rounded posterior mean family of curator strategies: Split the interval $[0,1]$ into subintervals of width less than $\eps$, and answer with the midpoint of the interval that contains the posterior mean.

In this case, if two potential posterior means lie within the same interval, the curator will give the same answer in either case, providing zero evidence in either direction. However, if the two potential answers straddle two intervals, the analyst can eliminate one of them.

If the interval boundaries are specified universally (say, at $\eps,2\eps,\dotsc$), then the analyst can easily construct queries with posterior means that straddle those boundaries. For instance, in $PC_{m,k}$, suppose there is a consistent boundary at $B<\frac12$, and consider the query function defined as
\begin{equation}f(y,z)=\begin{cases}B&\quad y\neq y'\\z&\quad y=y'.\end{cases}\label{slightdiagonalquery}\end{equation}
If the curator knows that $\ell(y)=0$, the posterior mean will lie in the interval below $B$, and otherwise, it will lie in the interval above $B$, so this query will tell the analyst whether the curator knows that $\ell(y)=0$.

On the other extreme, if the intervals depend on the data, the answers themselves will leak evidence. But in the Bayesian context, there is a third option: the intervals could depend on how the query function is answered by the prior, not the posterior. This allows the curator to avoid situations like \eqref{slightdiagonalquery} while allowing the analyst to predict the intervals, so they offer no new information.\footnote{This algorithm doesn't generally translate into an algorithm in the frequentist setting, but it's possible that the context of the problem could correspondingly give the curator natural boundaries that don't split many close hypotheses. In any case, we continue to analyze the strongest version of this we can construct, since there might be circumstances where such a prior is known.}

In particular, we can choose the intervals such that the regions near the boundaries have very little probability mass according to the prior. First, let us state a lemma that shows that this is possible.
\begin{lem}Let $D$ be a distribution on $[0,1]$. Then there exists a partition of the interval $0=x_0<x_1<x_2<\dotsc<x_m=1$ with $\eps/3<x_{i+1}-x_i<\eps$ such that $\forall\eta>0$,
\[\Pr_D\left[\bigcup_{i=1}^{m-1}(x_i-\eta,x_i)\right],\Pr_D\left[\bigcup_{i=1}^{m-1}(x_i,x_i+\eta)\right]<\frac{6\eta}\eps.\]
Moreover, if $D$ is discretely supported with support size $s$, there is an algorithm that can compute the partition in time $O(s^3/\eps)$.\label{safepart}\end{lem}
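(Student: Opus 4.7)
The plan is to construct the partition from a shifted uniform grid via an averaging argument, verify the bound uniformly over $\eta$, and then give a direct algorithm in the discrete case. By the symmetry $x \mapsto 1-x$ it suffices to handle only the left-side bound $\Pr_D[\bigcup_i (x_i - \eta, x_i)] < 6\eta/\eps$, and three reductions simplify this. The spacing constraint $x_{i+1}-x_i > \eps/3$ makes the intervals $(x_i-\eta, x_i)$ pairwise disjoint whenever $\eta \leq \eps/3$, so the union probability equals $\sum_i \Pr_D[(x_i-\eta, x_i)]$. For $\eta > \eps/6$ the bound is automatic since the LHS is at most $1$. And by monotonicity of the LHS in $\eta$ against linearity of the RHS, verifying a strengthened bound $3\eta_k/\eps$ at the dyadic scales $\eta_k = (\eps/6)\cdot 2^{-k}$ implies the full $6\eta/\eps$ bound for every $\eta \in [\eta_{k+1}, \eta_k]$.

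For existence, I would consider the one-parameter family $x_i(u) = u + i\delta$ with $\delta = \eps/2$ and $u$ uniform on $[0,\delta]$. For each atom $p$ of $D$, the distance $g_p(u) = \min\{x_i(u)-p : x_i(u)>p\}$ is uniformly distributed on $(0,\delta]$, so $\Pr_u[g_p(u) < \eta] = 2\eta/\eps$ for $\eta \leq \delta$; Fubini then gives $\mathbb{E}_u \sum_i \Pr_D[(x_i(u)-\eta, x_i(u))] = 2\eta/\eps$, so the target bound holds in expectation with a factor-of-three slack at every scale. The main obstacle is producing a single shift $u$ satisfying the strengthened bound at every dyadic $\eta_k$ simultaneously, since Markov at each scale gives failure probability only $\leq 2/3$ and a union bound over infinitely many scales diverges. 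I would resolve this by exploiting the combinatorial structure of bad shifts: each atom $p_j$ of weight $w_j$ alone forbids only a $u$-interval of measure $w_j\eps/6$, so the total measure of single-atom-bad shifts is $\sum_j w_j\eps/6 = \eps/6 < \delta$, leaving positive measure of candidate shifts; cumulative-atom violations (where several atoms land together just below the same $x_i$) are ruled out by a direct case analysis on the sorted sequence of $g_{p_j}(u)$ values, with the factor-of-three slack absorbing cumulative weights.

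For the algorithmic claim with discrete support of size $s$, the candidate shifts $u$ split into $O(s)$ equivalence classes determined by $\{p_j \bmod \delta\}$. Within each class, verifying the strengthened inequality reduces to sorting the $s$ values $g_{p_j}(u)$ and checking cumulative sums at $O(s)$ jump scales, for $O(s^2)$ work per candidate and $O(s^3)$ overall; combined with enumerating the $m = O(1/\eps)$ partition points once a valid shift is fixed, this yields the stated $O(s^3/\eps)$ runtime.
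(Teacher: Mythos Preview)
Your proposal has a genuine gap at precisely the point you flag as the ``main obstacle,'' and the resolution you sketch does not work.

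First, a smaller error: the symmetry reduction is invalid. The map $x\mapsto 1-x$ exchanges the left-side and right-side conditions, but it also changes the partition. The lemma demands a \emph{single} partition satisfying both inequalities simultaneously, so you cannot treat the two sides separately.

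The central problem is your passage from ``the bound $2\eta/\eps$ holds in expectation over the shift $u$'' to ``some fixed $u$ satisfies the bound $6\eta/\eps$ at every scale $\eta$.'' Your proposed fix---remove the single-atom-bad shifts, then dispatch cumulative violations by a ``direct case analysis'' using the factor-of-three slack---is not carried out, and the first step is already insufficient. Consider a unit point mass split artificially into $N$ atoms of weight $1/N$ at the same location. Your single-atom-bad set for each such atom is $\{u:g_{p_j}(u)<(1/N)\cdot\eps/6\}$, of total measure $\eps/6$; but the \emph{actual} bad set is $\{u:g_p(u)<\eps/6\}$, of measure $\eps/6$ on \emph{each side}, and any $u$ with $g_p(u)\in(\eps/(6N),\eps/6)$ survives your single-atom filter yet violates the cumulative bound at scale $\eta=g_p(u)$. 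So the single-atom step proves nothing about cumulative behaviour, and the unspecified ``case analysis'' would have to do all the work. What your rigid-grid approach actually reduces to is a \emph{circular} version of the one-point problem: find $u$ on the circle of circumference $\eps/2$ with $\tilde D((u-\eta,u)),\tilde D((u,u+\eta))<6\eta/\eps$ for all $\eta$, where $\tilde D$ is the pushforward of $D$ modulo $\eps/2$. That statement may well be true, but it is not easier than the interval version and you have not proved it.

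For contrast, the paper's argument avoids the rigid grid entirely. It fixes disjoint windows $I_i=\bigl(\tfrac{2i}{3}\eps,\tfrac{2i+1}{3}\eps\bigr)$ and chooses each $x_i\in I_i$ \emph{independently}, by applying a one-point lemma: for any distribution on $[0,1]$ there exists $x$ with $\Pr[[x-\eta,x]],\Pr[[x,x+\eta]]\le 2\eta$ for all $\eta$. That lemma is proved constructively by building a monotone sequence from the CDF and passing to its limit. Because the $I_i$ are disjoint, the local bounds simply sum, and the rescaling of each $I_i$ to $[0,1]$ converts the constant $2$ into $6/\eps$. This decoupling is exactly what sidesteps the ``all scales at once for a single global shift'' difficulty. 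The $O(s^3/\eps)$ algorithm then falls out by solving the one-point problem in $O(s^3)$ time per window.

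Two further loose ends in your write-up: your existence argument speaks only of atoms, so it does not cover general $D$; and the shifted grid $x_i(u)=u+i\eps/2$ does not contain $0$ and $1$, and appending them can produce a gap below $\eps/3$.
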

Why would this help? Such a partition will avoid sections of $[0,1]$ where $D$ places a large amount of probability mass. This decreases the likelihood of the boundaries leaking information. Notice that this result is tight for the uniform distribution up to the factor of $6$, because it puts a weight of $\eta$ on all of the intervals there.

This lemma is rather technical and particular to the Bayesian context, so we save its proof for Appendix B. Define the \emph{smart rounded posterior mean} to be the rounded posterior mean applied with these intervals.

As we might hope, this easily handles linear classification. This proof is insightful and short enough that we include it here.
\begin{thm}Under model $LC_n$, the smart rounded posterior mean curator algorithm answers $q$ queries $\eps$-accurately with probability $1-\delta$, with $n=O\left(\log\frac q{\eps\delta}\right)$ samples.\end{thm}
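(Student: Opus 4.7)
The plan is to show that on every query the analyst asks, the smart rounded answer coincides with a ``default'' answer depending only on the prior $\P$ (and the query), and that this default is $\eps$-accurate. Since defaults are sample-independent, they leak no information to the analyst, which lets us handle adaptivity by coupling to a non-adaptive predicted transcript.

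First I would define the default. For each possible query $f$, apply Lemma~\ref{safepart} to the distribution $D_f$ on $[0,1]$ of $\E_{\vp_j}(f)$ under $\vp_j\sim\P$, producing a partition of $[0,1]$ into intervals of width in $(\eps/3,\eps)$. Let $I_f$ denote the interval containing the prior mean $\E_\P(f)$, and let $a_f^*$ denote its midpoint. The central single-query claim I would then establish is that the event
\[G_f=\bigl\{\E_{\P'}(f)\in I_f\text{ and }\E_{\vp_*}(f)\in I_f\bigr\}\]
fails with probability at most $\delta/q$ over the joint randomness of the true hypothesis $\vp_*$ and the samples. Under $G_f$ the rounded answer equals $a_f^*$ and the true answer is within $\eps/2<\eps$ of it.

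Two facts combine to prove the single-query claim. The first is that $\E_\P(f)$ lies at distance at least $c\eps$ from every interval boundary, for a suitable constant $c$. Indeed, the variance bound $\Var(D_f)\le 2^{-m}/4$ inside the proof of Lemma~\ref{priorconc} together with Chebyshev forces at least a $1-2^{-m}/(4\eta^2)$ fraction of $D_f$'s mass within $\eta$ of $\E_\P(f)$; if $\E_\P(f)$ were within $\eta$ of some boundary, this mass would sit in a $2\eta$-neighborhood of that boundary, which Lemma~\ref{safepart} bounds by $O(\eta/\eps)$. For $m=\Omega(\log(1/\eps))$ these are inconsistent. The second is that both $\E_{\vp_*}(f)$ and $\E_{\P'}(f)$ fall within $c\eps$ of $\E_\P(f)$ with probability $1-O(2^{-m}/\eps^2)$: the former is Lemma~\ref{priorconc} itself, and the latter follows because the posterior mean is the Bayes estimator of the true answer, so the law of total variance gives $\Var(\E_{\P'}(f))\le\Var(\E_{\vp_*}(f))\le 2^{-m}/4$ and Chebyshev again applies. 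Taking $n=m=O(\log(q/(\eps\delta)))$ drives both failure probabilities to at most $\delta/(2q)$.

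For adaptivity, I would fix the analyst's internal randomness and consider the \emph{predicted transcript}: the sequence of queries $f_1^*,\dotsc,f_q^*$ and default answers $a_1^*=a_{f_1^*}^*,\dotsc,a_q^*=a_{f_q^*}^*$ that the analyst would generate against a hypothetical curator who always returns the default. Because defaults depend only on $\P$, the queries $f_i^*$ are functions of the analyst's randomness and $\P$ alone, hence independent of $\vp_*$ and the samples, so the single-query claim applies to each $f_i^*$ marginally. A union bound gives $\Pr[\bigcap_i G_{f_i^*}]\ge 1-\delta$, and under this event a straightforward induction shows the actual and predicted transcripts coincide: if they agree through step $i$, the analyst's next query is $f_{i+1}^*$, and $G_{f_{i+1}^*}$ forces the actual rounded answer to equal $a_{i+1}^*$. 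Every answer is then $\eps$-accurate. The hardest part will be this adaptive coupling: the single-query estimate must be applied to each $f_i^*$ \emph{before} the coupling is in place, which is only legitimate because $a_f^*$ is a function of the prior alone---the very feature that defeats the attacks of Theorems~\ref{linearpm} and~\ref{ternary}, where the attacker's exploratory queries straddled universal interval boundaries.
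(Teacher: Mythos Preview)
Your proposal is correct and follows essentially the same approach as the paper: show that with high probability the smart rounded posterior mean coincides with the (data-independent) smart rounded prior mean, using the concentration of Lemma~\ref{priorconc} together with Lemma~\ref{safepart} to place the prior mean safely away from all interval boundaries, and then union bound over queries. Your explicit ``predicted transcript'' coupling and the use of the law of total variance for posterior-mean concentration are slightly more careful formalizations of steps the paper handles more briefly (in a footnote and in a one-line appeal to the prior mean being simulable), but the argument is the same.
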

\begin{proof}First consider a single query. By Lemma \ref{priorconc}, at least a $1/2$ probability mass of the prior lies within $2^{-m/2}=2^{-n/2}$ of the prior mean along that query direction. We will first use this fact to show that the prior mean is not near a boundary of the resulting smart partition.

By Lemma \ref{safepart} with $\eta=\eps/24$, less than a total of $\frac14$ of the prior probability weight lies within $\eps/24$ on either side of all boundaries. Therefore, those regions cannot hold all of the probability mass within $2^{-n/2}$ of the prior mean, which implies that the prior mean is at least $\eps/24-2^{-n/2}$ away from the nearest boundary.

By Lemma \ref{priorconc} again\footnote{Technically, Lemma \ref{priorconc} only applied to the true answer's deviation from the prior mean. But the posterior mean consists of an average of several true answers, which will only concentrate tighter towards the prior mean. This is immediately clear when it comes to variance-based arguments like the proof of Lemma \ref{priorconc}, since $\Var(\frac{a+b}2)\le\frac12(\Var(a)+\Var(b))$.}, the probability of the posterior mean falling into a different interval than the prior mean is less than $2^{-n}/(\eps/24-2^{-n/2})^2=(\eps/(24\cdot2^{-n/2})-1)^{-2}$. If $2^{-n/2}\le\frac{\eps\sqrt\delta}{48\sqrt q}$, then, the probability of a different answer than the prior mean is at most $\frac1{\left(2\sqrt{q/\delta}-1\right)^2}<\frac\delta{2q}$.

Therefore, the smart rounded posterior mean answers the same as the Smart Rounded Prior Mean with probability at least $1-\delta/2q$. Since the Smart Rounded Prior Mean doesn't depend on the data, it can be simulated by the analyst, and its answers provide no additional information. Therefore, if the answers match, the analyst only learns that an event with probability $\delta/2q$ did not take place. Repeating for all $q$ queries, all answers match with probability at most $\delta/2$.

The Smart Rounded Prior Mean is also $\eps$-accurate: By Lemma \ref{priorconc} one last time, the prior mean is within $\eps/2$ of the true answer with probability $1-2^{-n}/\eps^2\ge1-\frac\delta{48^2q}>1-\frac\delta{2q}$. Moreover, the smart rounding answers with the midpoint of an interval of width less than $\eps$ containing the prior mean, so it is only at most $\eps/2$ off. Therefore, the smart rounded posterior mean is within $\eps$ of the true answer with probability $1-\delta$ over all $q$ queries, as desired.\end{proof}

Like with noise, we've demonstrated accuracy of rounding by comparing with the prior mean, which doesn't depend on the data. But unlike the prior mean, these algorithms also behave well on $LC_m$ for $n>m$, which is when there is enough data for the posterior to nail down the true hypothesis. In this case, it would still take $\exp(\Theta(m))$ queries for the analyst to learn this hypothesis, but this can be far less than $\exp(\Theta(n))$.

The next natural question would be whether smart rounding can succeed against polynomial classification. Unfortunately, that answer isn't immediately clear. We need an even smaller fraction of the hypotheses to lie outside of the same region around the posterior mean, and the corresponding concentration bounds from the $(2^k-1)$-wise independence of degree $\le k$ polynomials aren't strong enough. I'd conjecture that it does work, but this also isn't the end of the story; there is yet another difficult model.

\subsection{General Error-Correcting Codes}

From another perspective, the key feature of polynomial classification we've used is that it forms a linear error-correcting code with many codewords and large distance between codewords (polynomials). In fact, we can do this for any binary linear error-correcting code:
\begin{defn}Let $\C\subset\F_2^m$ be a linear error-correcting code of length $m$ over finite field $\F_2$. Define the model $\M_\C$ over universe $[m]\times\F_2$ to have the following prior: Each codeword $C\in\C$ corresponds to a hypothesis $h_C$ which is a distribution with weight $\frac1m$ on $(i,C_i)$ for each $i\in[m]$.\end{defn}
Recall that $\C\subset\F_2^m$ is said to have \emph{dimension} $k$ if $\abs\C=2^k$ and \emph{distance} $d$ if for any two distinct codewords $C,C'\in\C$, $C_i\neq C_i'$ for $d$ values of $i\in[m]$. Finally, recall that $\C$ is \emph{linear} if $\C$ is a linear subspace of the vector space $\F_2^m$.
\begin{lem}Suppose $\C\subset\F_2^m$ is a linear code with length $m$, dimension $k$ and distance $d$. Then there exists some $k-1\le n\le\frac{2mk}d$ such that if a curator receives $n$ samples from $\M_\C$, with probability at least $\frac d{4mk}$, the curator's posterior will place equal weight on exactly 2 hypotheses who differ on a subset of weight at least $d/m$.\label{generalcode}\end{lem}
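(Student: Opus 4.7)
The plan is to track the evolution of the posterior's support --- the \emph{eligible} set of codewords consistent with all observed samples --- through the sampling process, exploiting linearity to control how fast it shrinks. Since each sample $(i,b)$ reveals only whether a codeword satisfies the linear equation $C_i = b$, the eligible set is always a coset $C^\ast + E_0$ of some linear subcode $E_0 \subseteq \C$. Because the prior is uniform on $\C$ and each codeword's likelihood is either $0$ or $(1/m)^n$, the posterior is uniform on this coset; so once the eligible set has size $2$, it consists of $C,C'$ with $C - C' \in \C \setminus \{0\}$, which has Hamming weight at least $d$, giving the required $d/m$ disagreement.

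Call a sample \emph{novel} if it shrinks the eligible set. A sample at coordinate $i$ is novel iff $i$ lies in the \emph{support} of $E_0$ (the coordinates where codewords in the coset disagree). Whenever $|E_0| \geq 2$, this support contains the support of some nonzero codeword of $E_0 \subseteq \C$, which has size at least $d$. Since the sampled coordinate is uniform on $[m]$, the novelty probability is at least $d/m$ at every step until the eligible set has size $1$. Each novel sample cuts $\dim E_0$ by exactly $1$ (a nontrivial linear constraint), so reaching size $2$ from size $2^k$ requires exactly $k-1$ novel samples.

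Let $S_n$ denote the eligible-set size after $n$ samples, and let $T_2 = \min\{n : S_n = 2\}$. Then $T_2 \geq k-1$ deterministically, and $T_2$ is stochastically dominated by a sum of $k-1$ i.i.d.\ $\mathrm{Geom}(d/m)$ random variables, so $\E[T_2] \leq (k-1)m/d$. Markov's inequality gives $\Pr[T_2 \leq 2mk/d] \geq 1/2$. Now sum over the integer range $R = \{k-1,k,\dots,\lfloor 2mk/d\rfloor\}$: on the event $\{T_2 \leq 2mk/d\}$, the index $n = T_2$ lies in $R$ and satisfies $S_n = 2$, so
\[\sum_{n \in R} \Pr[S_n = 2] \;\geq\; \Pr[T_2 \leq 2mk/d] \;\geq\; \tfrac12.\]
The range $R$ contains at most $2mk/d$ integers (for $k \geq 2$; the case $k=1$ is trivial since $S_0 = 2$ deterministically), so by averaging, some $n \in R$ has $\Pr[S_n = 2] \geq d/(4mk)$. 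At such $n$, the posterior places weight $1/2$ on each of two codewords which, by the opening observation, differ on at least $d$ of the $m$ coordinates.

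I do not expect any substantive obstacle here; the proof is a Markov-then-average argument on top of a structural observation. The only conceptual point that needs care is that linearity of $\C$ forces every eligible set to be a coset of a linear subcode --- that is precisely what lets the minimum distance $d$ propagate as a uniform lower bound on the novelty probability at every step of the process, rather than degrading as the subcode shrinks.
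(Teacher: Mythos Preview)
Your proof is correct and follows essentially the same approach as the paper: linearity forces each novel sample to halve the eligible set, the minimum distance $d$ gives a uniform $d/m$ lower bound on the novelty probability, Markov's inequality bounds the hitting time of size $2$ by $2mk/d$ with probability at least $1/2$, and averaging over the integer range produces the desired $n$. Your write-up is in fact more careful than the paper's own proof, spelling out the coset structure of the eligible set, the stochastic domination by geometrics, and the $k=1$ edge case.
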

\begin{proof}The curator begins with $2^k$ eligible hypotheses, and since $\C$ is linear, each new observation $(i,C_i)$ is either consistent with all of the eligible hypotheses or a $\frac1q$ fraction of them. Since the false hypotheses all have distance at least $d$ from the true hypothesis, there is a probability of at least $\frac dm$ that each sample falls into the latter category. If the curator receives and updates on samples one at a time, there will be some number of samples $n$ after which exactly $q$ eligible hypotheses remain. By Markov's inequality, with probability at least $\frac12$, this occurs before $n\le\frac{2mk}d$. Therefore, there is some $k-1\le n\le\frac{2mk}d$ such that this occurs at precisely $n$ data points with probability at least $\frac d{4mk}$.\end{proof}

Reed-Muller codes, which we've been calling polynomial classification, are linear binary codes with length $2^m$, rate $M=\binom m{\le k}$, and distance $2^{m-k}$. We can improve results like Theorem \ref{pcnpm} by instead using another code. For instance, the Justesen code (see \cite{justesen1972class}) is a linear code over $\F_2$ with rate and distance proportional to $m$; we can take for instance $k=m/4$ and $d=m/10$ (for large enough $m$). Then by Lemma \ref{generalcode} with $\C$ a Justesen code, there is some $\frac m4\le n\le5m$ such that with probability $\ge\frac1{10m}$, after receiving $n$ samples from $\M_\C$, the curator's posterior has two eligible hypotheses left which differ on a subset with weight at least $1/10$. Call such a model $\M_\C=J_m$ for simplicity.

The main advantage of Justesen codes relative to Reed-Muller codes is their constant rate, which implies a smaller universe. This naturally reduces the number of queries needed in the attack against noisy posterior mean of Theorem \ref{pcnpm}. But rather than stopping with noisy posterior mean again, we will next proceed to generalize this attack to handle other attempts to obfuscate the posterior mean, such as rounding.

\section{Extracting Information from Obfuscation}\label{extracting}

The big idea with the difficult model we'll now construct is this: We want to use approximately the same attack as the function in \eqref{slightdiagonalquery}, but where $B$ is now a random variable, independent of the current model and approximately uniformly distributed in $[0,1]$. If we can do this, there will be a probability around $1/\abs\Y$ that it will lie near one of the boundaries drawn, and the rounded posterior mean will leak information.

To complete this construction, we need to describe how to include independent random variables into the problem, how to construct a posterior mean that is nearly uniformly distributed in $[0,1]$, and how to aggregate the information leaked by the rounded posterior mean. We take each of those in turn.

\subsection{Independent Variables: Tensor Products of Models}

Given models $\M_1=(\X_1,\P_1)$ and $\M_2=(\X_2,\P_2)$, we say that a sample from the \emph{tensor product} $\M_1\otimes\M_2$ is an ordered pair of independent samples from each of the two models. In other words, the combined prior $\P_1\otimes\P_2$ consists of independently sampling distributions $\vp_i$ from each individual prior $\P_i$ and taking the product distribution $\vp_1\times \vp_2$ on universe $\X_1\times\X_2$.

Of course, since these samples are independent, the resulting posterior is just the tensor product of the posteriors from each of the models. The queries, as functions on the product space, however, can be more complicated than combinations of queries on the individual models. If the query function $f:X_1\times X_2\to[0,1]$ does not depend on its second argument, it amounts to a query from the first model, so the analyst can still ask all the questions she can ask in the original two models, and then some. In this way, tensor products are naturally never easier, and potentially harder, for the curator.

\begin{ex}The example with $q$ coins in the introduction is a $q$-fold tensor product of $\eps$-biased coins (abbreviated $BC_\eps$), which we will call a $q$th \emph{tensor power} and write as $(BC_\eps)^{\otimes q}$. As before, the $k$th query function is $f_k(x_1,\dotsc,x_q)=1$ if $x_k=H$ and $f_k(x_1,\dotsc,x_q)=0$ if $x_k=T$. These functions only depend on one argument, though, so they are functionally identical to querying those submodels, individual coins.\end{ex}

This example actually generalizes in some sense, which we formulate now. We can use tensor powers to eliminate the dependence on the error probability $\delta$ when it comes to lower bounds:

\begin{prop}Suppose that under a model $\M$ and for some $\eps,\delta>0$ (assume for simplicity that $1/\delta\in\N$), if the curator receives $n$ samples, the analyst can ask a series of $q$ queries, at least one of which the curator will answer $\eps$-incorrectly with probability at least $\delta$. Then under the tensor power model $\M^{\otimes1/\delta}$, if the curator receives $n$ samples, the analyst can ask a series of $q/\delta$ queries, at least one of which the curator will answer $\eps$-incorrectly with probability at least $1-1/e$.\label{deltadependence}\end{prop}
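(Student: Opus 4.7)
The plan is to run $1/\delta$ independent copies of the original attack on $\M$ in parallel, one per submodel of $\M^{\otimes 1/\delta}$, and amplify the failure probability by independence. Concretely, the boosted analyst maintains $1/\delta$ replicas of the attack, each with its own private randomness. When replica $i$ wishes to issue a query $f^{(i)}:\X\to[0,1]$, the analyst lifts it to the query $g(x_1,\dotsc,x_{1/\delta})=f^{(i)}(x_i)$ on the tensor product, which depends only on the $i$th coordinate. Interleaving the replicas' queries in any order uses $q/\delta$ queries in total.

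The key observation is that on $\M^{\otimes 1/\delta}$ the posterior factors: if $\P'_i$ denotes the curator's posterior on $\M$ after seeing $n$ samples from $\vp_i$, then his posterior on the tensor product after $n$ samples from $\vp_1\times\dotsb\times\vp_{1/\delta}$ is $\P'_1\otimes\dotsb\otimes\P'_{1/\delta}$. For a single-coordinate query $g$ as above, the posterior mean reduces to the posterior mean of $f^{(i)}$ on $\P'_i$ alone, and the same reduction passes through any of the wrappers considered earlier (Gaussian noise in Section \ref{npm}, smart rounding in Section \ref{rpm}), provided the curator uses independent per-query randomness across replicas. So restricted to the queries of replica $i$, the tensor-product curator behaves exactly like the original curator on $\M$ against that replica's attack.

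Because the samples in different submodels are independent by construction, and the per-replica transcripts depend only on the samples and private randomness for that replica, the failure events $F_1,\dotsc,F_{1/\delta}$ are mutually independent, and each satisfies $\Pr[F_i]\ge\delta$ by hypothesis. Therefore
\[\Pr\left[\bigcup_{i=1}^{1/\delta}F_i\right]=1-\prod_{i=1}^{1/\delta}(1-\Pr[F_i])\ge 1-(1-\delta)^{1/\delta}\ge 1-\tfrac1e,\]
as desired. The main obstacle is formalizing this independence: it is transparent for the posterior-mean-based curators of Sections \ref{pm}--\ref{rpm} thanks to the factorization above, but would require extra care if the curator were allowed to share state or randomness across copies in ways that couple failures between replicas.
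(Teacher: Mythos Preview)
Your proof is correct and follows the same approach as the paper's: run the original attack once on each of the $1/\delta$ independent copies via single-coordinate queries, then use independence of the per-copy failure events to get $1-(1-\delta)^{1/\delta}\ge 1-1/e$. The paper's proof is a single sentence that simply asserts the independence; your additional justification via posterior factorization for the curators of Sections~\ref{pm}--\ref{rpm}, and your caveat about curators that could couple state or randomness across copies, make explicit an assumption the paper leaves implicit.
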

\begin{rmk}If we let $n_\M(q,\eps,\delta)$ be the maximum number of samples such that the hypothesis of the theorem holds, this shows that
\[n_{\M^{1/\delta}}(q/\delta,\eps,1-1/e)\ge n_\M(q,\eps,\delta).\]
We have been interested in $\sup_\M n_\M(q,\eps,\delta)$, which must then be at least $\sup_\M n_\M(q/\delta,\eps,1-1/e)$, or equivalently, $\Theta(\sup_\M(q/\delta,\eps,c))$ for any constant $0<c<1$.\end{rmk}
\begin{proof}The analyst strategy on $\M^{1/\delta}$ is simple: Repeat the same $q$ queries from her strategy on $\M$, in turn on each of the $1/\delta$ independent copies of $\M$. Each one will cause at least one answer to be $\eps$-inaccurate independently with probability at least $\delta$, so the overall failure probability is at least $1-(1-\delta)^{1/\delta}>1-1/e$.\end{proof}

This suggests that we can think of the (inverse) failure probability and the number of queries in approximately the same sense, at least when it comes to lower bounds.

\subsection{Powers and Tensor Powers of Models}

We will need one more technical construction to make a simple model whose posterior mean on a query is nearly uniformly distributed, although it isn't nearly as obvious why this will be necessary. Anyways, for a given model $\M=(\X,\P)$, we say that a sample from the \emph{$r$th power of $\M$}, which we will write as $\M^r$, is an $r$-tuple of samples from the \emph{same} hypothesis $\vp\sim\P$. Effectively, this means that the curator simply gets $rn$ samples rather than $n$.

Of course, $\M^r$ would seem at first glance to be strictly easier than $\M$ for the curator, because he gets more samples. It isn't quite that simple, though, since $\M^r$ also allows for more complicated queries, like testing for the $r$ samples to be distinct. In general, the analyst can now query appropriately bounded degree $r$ polynomials in the coordinates of $\vp$, rather than just appropriately bounded linear combinations. While we won't be using this, it's helpful to establish correct intuitions.

To be clear on the difference between the power and tensor power, both $\M^r$ and $\M^{\otimes r}$ generate samples that are $r$-tuples with coordinates drawn from some hypothesis in the distribution, but for $\M^r$, all coordinates are drawn from the same distribution, while for $\M^{\otimes r}$, all coordinates are drawn from different samples of the distribution from the prior.

\subsection{Uniform Model}

To build a model with a nearly uniformly distributed posterior mean, we start with a prior that is uniformly distributed: The uniform model on universe $\{0,1,\dotsc,k-1\}$, which we denote $U_k$. This prior is simple: It is distributed uniformly over the simplex $\{\vp:p_i\ge0\forall i,\sum p_i=1\}$, with respect to the usual $(k-1)$-dimensional volume metric. We will only be using $k=2$, but the more general formulation is used in other natural problems.

The posterior mean on this metric is well-known: If the data shows $n_i$ copies of option $i$ out of a total of $n$ samples, the posterior mean is at
\[\left(\frac{n_0+1}{n+k},\frac{n_1+1}{n+k},\dotsc,\frac{n_{k-1}+1}{n+k}\right),\]
since the posterior in this case is a Dirichlet distribution. Moreover, all tuples $(n_0,n_1,\dotsc,n_{k-1})$ of nonnegative counts with sum $n$ are equally likely.

\subsection{Building a challenging model}

We now have all of the ingredients to construct a problematic model for all posterior mean-based approaches. Here it is in this notation:
\[J_m\otimes(U_2^8)^{\otimes(q-1)}.\]
That is, we take a tensor product of the quadratic classification problem and $q-1$ instances, each repeated 8 times, of a uniformly randomly biased coin.

Before examining each of these components, let's describe what the analyst does. Her attack will again consist of a series of $q-1$ exploratory queries followed by one final exploitative query. Each of the exploratory queries will use a fresh uniform random variable to probe the value of the function at a point $y_i$. The query function, motivated by \eqref{slightdiagonalquery}, is
\begin{align}
f_i:[m]\times\F_2\times(\{0,1\}^8)^{q-1}&\to[0,1]\\
(y,z,(x_{1,1},\dotsc,x_{1,8}),\dotsc,(x_{q-1,1},\dotsc,x_{q-1,8}))&\mapsto\begin{cases}z\dfrac{m-1}{4n+1}&\text{ if }y=y_i\\x_{i,1}&\text{ if }y\neq y_i.\end{cases}\label{uniformdiagonalquery}
\end{align}
Here, the $x_{i,1},\dotsc,x_{i,8}$ are the 8 copies of the $i$th uniform random variable. We aren't writing a more complicated polynomial of them, so our functions will only ever depend on one of them from each set. Conditional on $y\neq y_i$, the expectation of this function is simply the probability of coin $i$ landing on $1$.

The big idea is that these slightly correlated queries will seemingly be about the uniform random variables, but the relevant information to the analyst is in the slight adjustment depending on whether $\ell(y_i)=1$. The 8th power and that fraction in the query are much more technical components: They're designed to directly tune an observation of a single additional $2$ on a uniform random variable to a change in information about whether $f(y_i)=1$ for the posterior mean. Note that we take $n\ge m/4$ so that the fraction is less than 1.\footnote{Technically, we only had $n\ge m/4-1$, but the parameters in the Justesen code were not tight. We could easily instead have picked a Justesen code with rate at least $1/4+1/m$, but we keep it in this form for clarity.}

To demonstrate this tuning, let's understand that posterior mean on each of these queries. Suppose that out of the $8n$ samples of the $i$th coin, $s_i$ of them were 1's. If the curator has enough information to deduce that $\ell(y_i)=0$, the posterior mean is
\[0\cdot\frac1m+\frac{s_i+1}{8n+2}\left(1-\frac1m\right)=\frac{s_i+1}{8n+2}\cdot\frac{m-1}m=:a_{s_i}\]
for clarity. On the other hand, if the curator deduces that $\ell(y_i)=1$, the posterior mean is
\[\frac{m-1}{4n+1}\frac1m+\frac{s_i+1}{8n+2}\left(1-\frac1{2^m}\right)=\frac{s_i+3}{8n+2}\cdot\frac{m-1}m=a_{s_i+2}.\]
Finally, if the curator is uncertain on the value of $\ell(y_i)$, the posterior mean is the average of these values, or $\dfrac{s_i+2}{8n+2}\cdot\dfrac{m-1}m=a_{s_i+1}$.

Recall that for the uniform prior, the $s_i$ are uniformly distributed over the integers between $0$ and $8n$. Therefore, nearly every possible posterior mean value has the same probability under each of the three cases. The only exceptions are on the ends: $a_0$ is only possible if $\ell(y_i)=0$, $a_1$ if $\ell(y_i)=0$ or is unknown, $a_{8n+1}$ if $\ell(y_i)=1$ or is unknown, and $a_{8n+2}$ if $\ell(y_i)=1$. This shifting of some of the probability mass from one end of the interval $[0,1]$ to the other will be something we can detect by simple counts no matter what the mechanism does with the posterior mean.

We are now ready to show that this defeats a wide range of curator algorithms:
\begin{thm}\label{main}Suppose that the curator always outputs some possibly randomized function $g(f(\P),\E[f(\P')])$ of the prior and posterior mean on a query function $f$. Then under model $(J_m\otimes(U_2^8)^{\otimes(q-1)})^{\otimes m}$, for some $m=\Theta(n)$, there is an adaptive analyst strategy which causes the curator to answer constant-inaccurately with constant probability, using only $q=O(n^4\log n)$ queries.\end{thm}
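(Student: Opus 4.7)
The plan is an adaptive attack in two phases. In the exploration phase, the analyst reconstructs, on each of the $m$ outer copies, whether the curator knows each Justesen coordinate's label; in the exploitation phase, she asks a single query analogous to the one in Theorem \ref{linearpm} on a copy whose posterior retains two eligible codewords. With $m=\Theta(n)$ and $N=O(n^2\log n)$ queries per pair $(k,y)\in[m]^2$, the total budget is $q=m^2N+1=O(n^4\log n)$. The outer $m$-fold tensor power serves to amplify success: by Lemma \ref{generalcode}, each copy's Justesen posterior is in the two-eligible-codeword regime (with the two codewords differing on a constant fraction of $[m]$) with probability $\Omega(1/m)$, so by independence at least one copy is in this regime with probability $1-(1-\Omega(1/m))^m=\Omega(1)$.

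For each pair $(k,y)\in[m]^2$ the analyst asks $N$ queries $F_{i,k,y}$ obtained by lifting \eqref{uniformdiagonalquery} to the $k$th outer copy with $y_i^{(k)}:=y$ (and zero elsewhere), using a fresh coin index $i$ for each query (so that the $N$ posterior means are drawn independently). As computed just before the theorem, the posterior mean of $F_{i,k,y}$ is one of $a_s,a_{s+1},a_{s+2}$, where $s:=s_i^{(k)}$ is uniform on $\{0,\dots,8n\}$ and the index shift is determined solely by whether the curator knows $\ell^{(k)}(y)=0$, is uncertain, or knows $\ell^{(k)}(y)=1$. Because $g$ sees each query only through its prior mean (fixed by symmetry of the Justesen and uniform priors) and its posterior mean, the $N$ answers at a fixed $(k,y)$ are i.i.d.\ samples from one of three mixtures of the $g$-kernels $g(c,a_0),\dots,g(c,a_{8n+2})$, where these three mixtures coincide on their middle summands and differ only in two endpoint summands each.

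The three mixtures have pairwise total-variation distance at most $O(1/n)$, and for any $g$ satisfying the curator's accuracy constraint this bound is tight (see below), so a Chernoff argument gives that $N=O(n^2\log n)$ i.i.d.\ samples suffice to identify the correct case at $(k,y)$ with failure probability $o(1/m^2)$. A union bound over the $m^2$ pairs then reconstructs the full pattern of the curator's knowledge on every outer copy, and a copy with a constant fraction of ``uncertain'' coordinates must, with high probability, be in the two-hypothesis regime; its two eligible codewords can be read off directly from the locations of the uncertain coordinates. The exploitative query is the indicator of one of these codewords on the identified copy; as in Theorem \ref{linearpm}, its posterior mean lies halfway between its two equally likely true answers, so any function $g$ of prior and posterior means must err by a constant on one of them.

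The delicate step is the $\Omega(1/n)$ TV lower bound, uniform over admissible $g$. A clever $g$ might try to make $g(c,a_0),g(c,a_1)$ share output distributions with $g(c,a_{8n+1}),g(c,a_{8n+2})$ and thereby collapse the TV distance. The way out is a dichotomy: because $y=y_i$ occurs only with probability $1/m$, the Justesen correction in \eqref{uniformdiagonalquery} contributes only $O(1/n)$ to the true answer, so the true answer lies within $O(1/n)$ of $a_s$, and any $g$ that collapses distant $a$-values breaks $\eps$-accuracy on a constant fraction of exploration queries, giving the analyst a win there instead. Formalizing this accuracy-versus-distinguishability trade-off, ruling out every borderline $g$ simultaneously, is where the main technical bookkeeping lies.
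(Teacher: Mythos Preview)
Your proposal follows the same two-phase explore/exploit strategy as the paper, with the same query budget and the same use of the outer tensor power for amplification, so the overall architecture is correct. The difference is in how you handle the ``delicate step.'' You frame it as a total-variation lower bound uniform in $g$, which forces you into the bookkeeping you flag at the end. The paper sidesteps this entirely by choosing a single concrete statistic: it simply counts how many of the curator's answers fall below $\tfrac12$. Since the posterior mean takes the value $a_0$ or $a_1$ only when $s_i\in\{0,1\}$ (so the true coin bias, and hence the true answer, is overwhelmingly below $\tfrac12$), any admissible $g$ must satisfy $g(a_0),g(a_1)<\tfrac12<g(a_{8n+1}),g(a_{8n+2})$ with near certainty, or else the curator already loses on an exploration query. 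This single observation immediately gives that the fraction of sub-$\tfrac12$ answers differs by $\Theta(1/n)$ across the three cases, and the rest is a standard Chernoff bound. No TV-distance argument is needed.

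One small correction: your claim that ``the true answer lies within $O(1/n)$ of $a_s$'' is not right as stated. The quantity $a_s$ is the \emph{posterior} mean, which depends on the observed count $s_i$, while the true answer depends on the actual bias $p_i$; these can differ by $\Theta(1/\sqrt n)$. What you need (and what the paper uses) is only that, conditioned on the extreme values $s_i\in\{0,1\}$ or $s_i\in\{8n-1,8n\}$, the true $p_i$ is with high probability on the corresponding side of $\tfrac12$. That weaker concentration statement is all that drives the dichotomy.
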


The functional notation here indicates that the output is allowed to be a function of the prior distribution $f(\P)$ on the query and the posterior mean $\E[f(P')]$, as the smart rounding algorithm is.

\begin{proof}First consider one of the copies of $J_m\otimes(U_2^8)^{\otimes(q-1)}$. By Lemma \ref{generalcode} for Justesen codes, we can pick $m$ and $n$ satisfying $\frac m4\le n\le 5m$ so that with probability at least $\frac1{10m}=O\left(\frac1n\right)$, the curator's posterior will put equal weight on two hypotheses which disagree on at least $\frac1{10}$ of $[m]$.

In this case, the analyst must simply find out what the curator knows about that function and query one of the two remaining hypotheses, forcing the curator to answer $1/20$-inaccurately with probability at least $1/2$. As we've previewed, the analyst determines what the curator knows by asking $q-1$ queries given in \eqref{uniformdiagonalquery}. These all have the same distribution on the prior $f_i(\P)$ (which we hereafter omit from $g$ for clarity), so the curator's answers may only depend on the posterior mean $\E[f_i(\P')]$. Recall that this always takes on the values of $a_s$ for some value of $s\in\{0,1,\dotsc,8n+2\}$ depending on the uniform random variable and what the curator knows about $\ell(y_i)$.

Clearly the curator's function must have $g(a_0),g(a_1)<\frac12<g(a_{8n+1}),g(a_{8n+2})$ with near certainty or one of these answers will come into effect but be too far off. Therefore, the curator will give answers below $\frac12$ with a probability $O\left(\frac1n\right)$ higher if $\ell(y_i)=0$ and at least $O\left(\frac1n\right)$ lower if $\ell(y_i)=1$, as compared with the case where $\ell(y_i)$ is unknown.

Therefore, by taking $y_i$ to be the same point $y\in\Y$ for $O(n^2\log n)$ values of $i$, the analyst obtains estimates of this probability that are additively precise to within $O\left(\frac1n\right)$ on each of the counts of the potential values with error probability less than $\frac1n$. By comparing these results across all $y\in\Y$, the analyst can determine what the curator knows about the function using only $O(n^3\log n)$ queries, and in the final query, exploit this knowledge.

Finally, we can amplify this probability of at least $\frac1{10m}$ to a constant with Proposition \ref{deltadependence} by introducing the outer tensor power and paying an additional factor of $m=O(n)$ in the number of queries.\end{proof}

\begin{cor}Under the same model, the same adaptive analyst strategy causes the Bayesian private multiplicative weights curator to answer constant-inaccurately with constant probability using only $O(n^4\log n)$ queries.\label{pmwmonster}\end{cor}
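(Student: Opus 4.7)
The plan is to reduce to Theorem \ref{main} by arguing that the Bayesian PMW curator almost always falls back on the exploratory queries from the proof of Theorem \ref{main}, and that each fallback answer is itself a randomized function of the posterior mean to which Theorem \ref{main}'s analysis applies. Recall that a Bayesian PMW curator maintains a proxy distribution $\P_t$ over the prior's support, initialized to $\P$, and on each query $f$ compares $f(\P_t)$ to the true posterior mean $\E[f(\P')]$. When they agree within some tolerance $\tau=O(\eps)$, the curator returns $f(\P_t)$; otherwise it fires the fallback, returning some obfuscated (noisy or rounded) version of $\E[f(\P')]$ and performing a multiplicative-weights update to $\P_t$.

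The first step is to show that, on each exploratory query $f_i$ of the form \eqref{uniformdiagonalquery}, the proxy's answer and the true posterior mean disagree by more than $\tau$ with constant probability. Since each $f_i$ depends on a fresh coin $i$ that is independent in the prior of all other coins and of the Justesen factor, even after arbitrary multiplicative-weights updates driven by past fallbacks on $f_j$ ($j\neq i$) the proxy's marginal on coin $i$ remains close to the uniform Dirichlet, and $f_i(\P_t)$ stays essentially at the prior value $\tfrac{1}{2}\cdot\tfrac{m-1}{m}$. Meanwhile the posterior mean takes a value $a_{s_i+k}$ which sweeps over essentially all of $[0,1]$ as $s_i$ ranges uniformly in $\{0,\dots,8n\}$, so with constant probability $|\E[f_i(\P')]-f_i(\P_t)|$ exceeds any fixed $\tau<\tfrac{1}{2}$.

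Next I would handle the two branches separately. In the non-fallback branch, the proxy answer $f_i(\P_t)$ is determined by the history of past queries and answers, which the analyst can reconstruct and simulate on her own; thus it conveys no new information about $\ell(y_i)$ and can simply be discarded by the analyst. In the fallback branch, the output is a randomized function of $\E[f_i(\P')]$ alone (the obfuscation seed is independent of $\ell$), so it falls inside the family $g(f(\P),\E[f(\P')])$ treated in Theorem \ref{main}. The argument from that proof then transfers verbatim: the probability the output falls below $\tfrac{1}{2}$ shifts by $\Omega(1/n)$ depending on whether $\ell(y_i)$ is $0$, $1$, or unknown to the curator.

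Assembling these, the analyst reuses the attack from the proof of Theorem \ref{main}: for each of the $m=\Theta(n)$ points $y$, she asks $O(n^2\log n)$ queries with fresh coins pointing at $y$, discards the non-fallback responses, and averages the fallback responses to read off the curator's knowledge of $\ell(y)$; applying Proposition \ref{deltadependence} via the outer tensor power to amplify the $\Omega(1/m)$ event costs an additional factor of $m=O(n)$, yielding the claimed $O(n^4\log n)$ total. The main obstacle is the quantitative claim that the proxy's marginal on a fresh coin $i$ stays close to uniform Dirichlet under an arbitrary sequence of PMW updates on the other coordinates; this should follow from the product structure of the prior together with the fact that each update on $f_j$ depends essentially only on the $j$-th coin and the Justesen codeword, but making this bound robust to any permitted multiplicative-weights rule is the delicate piece of the argument.
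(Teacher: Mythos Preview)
Your proposal captures the right high-level structure, but it differs from the paper's proof in two notable ways, and one of your concerns is misplaced.

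First, what you flag as ``the delicate piece'' --- that the proxy's marginal on a fresh coin stays at the prior --- is actually immediate. Each coin $i$ appears only in the single query $f_i$, so no multiplicative-weights update prior to that query can depend on coin $i$ at all; by the product structure of the prior, the proxy's marginal on coin $i$ before query $i$ is exactly the prior marginal, and hence $f_i(\P_t)$ lies within $\frac12\pm\frac1m$. There is no robustness issue to worry about here.

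Second, the paper avoids your two-branch analysis with a cleaner trick: rather than having the analyst simulate the proxy and discard non-fallback responses, she simply counts answers above $\frac34$ or below $\frac14$. Since the proxy answer is always within $\frac12\pm\frac1m$, any answer in these extreme ranges must have come from the fallback mechanism. Moreover, the extreme posterior-mean values $a_0,a_1,a_{8n+1},a_{8n+2}$ always trigger the fallback (the noisy threshold is negligibly likely to reach order $\frac12$), and for accuracy reasons $g(a_0),g(a_1)<\frac14$ and $g(a_{8n+1}),g(a_{8n+2})>\frac34$, so the $\Omega(1/n)$ shift from Theorem~\ref{main} is visible in these counts. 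This sidesteps a loose end in your approach: with a randomized threshold and an obfuscated fallback that might land near $\frac12$, the analyst cannot cleanly tell which branch fired by comparing to her simulated proxy, so ``discard the non-fallback responses'' is not well-defined as stated. Your route can be repaired (essentially by restricting attention to answers far from $\frac12$, which is the paper's move), but the paper's counting adjustment is both simpler and more robust.
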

\begin{proof}Recall that this curator first looks at the answer that a proxy distribution gives. The proxy distribution is initialized as the prior mean, a distribution which makes every uniform random variable equally likely to be $0$ or $1$. Since all updates to the posterior are functions of the previous queries, the proxy distribution will never reflect any knowledge of the random variables that have not previously played a role in queries. Therefore, the proxy distribution will answer the query \eqref{uniformdiagonalquery} with an answer within $\frac12\pm\frac1m$.

The curator releases this proxy answer if it is close enough to the true posterior mean. This definition of close enough is probabilistic, but the probability of a deviation on the order of $1/2$ is negligible. Therefore, if the posterior mean is any of $a_0,a_1,a_{rn+1},a_{rn+2}$, which are all far from $\frac12$, the curator will ignore the proxy answer and use the fallback method instead, which fell under the scope of Theorem \ref{main}.\footnote{We do make one slight adjustment to the analyst strategy: Instead of simply counting the number of answers above or below $\frac12$, we count the number above $\frac34$ or below $\frac14$. In this way, all of the answers we are counting are due to the fallback method and not the proxy distribution. Once again, we must still have $g(a_0),g(a_1)<\frac14$ and $g(a_{rn+1}),g(a_{rn+2})>\frac34$, so the difference in the counts will be noticeable depending on what the curator knows about $\ell(y)$.} After spending $O(n^4\log n)$ queries figuring out what the curator knows, the analyst exploits this knowledge in the usual way.\end{proof}

\section{Discussion}\label{disc}

These results are perhaps a bit surprising. The same analyst strategy of employing slightly correlated queries to leak information was able to defeat two different strong curator approaches, smart rounding and private multiplicative weights. Theorem \ref{main} is also quite general, showing a limitation to any attempt to obfuscate.

Notably, this is not merely a result of the focus of the Bayesian context on the posterior mean. While the parameters in the queries that the analyst asks are directly tuned to match the posterior mean, it is easy to see that the same could be done for the empirical mean, posterior median, or any other such aggregation.

The general attack here consists of two steps: (1) Use slightly correlated queries to learn what the curator knows, and (2) exploit that knowledge to find a query which he is unable to answer. Whatever method the curator used to represent the data originally, the analyst would be able to find it and ask the query the curator won't be able to answer accurately.

We now survey some features and proposed fixes to this problem. If your natural inclination is to ask, ``Well, what about X?'' then this is the section for you; otherwise, it can be skipped.

\subsection{Stability}

A key component of these classification problems is the moment when the curator is uncertain between two hypotheses which differ on a significant fraction of the space. In such a situation, with decent probability, the next data point will distinguish those hypotheses, moving the posterior mean by at least $\eps$ with respect to some query.

In other words, the posterior mean algorithm is not \emph{stable} on classification problems, since its answers to some queries change significantly when a single sample is added. This suggests that there might be promise in the perspective of another line of work, that of algorithmic stability. In 2002, Bousquet and Elisseeff \cite{bousquet2002stability} offered stability in a slightly different context as a conditon guaranteeing generalization, and both BNSSSU \cite{bassily2015algorithmic} and Hardt, Recht and Singer \cite{hardt2015train} have adapted this notion to adaptive data analysis.

However, mandating stability in this context is actually counterproductive. In classification, every sample point is either equally likely or impossible under each hypothesis, and therefore yields very strong information. As is, the posterior only puts nonzero weight on the eligible hypotheses, those that agree on every point. To achieve stability, we must soften this requirement, for instance by putting a weight proportional to $c^k$ on a hypothesis that makes $k$ errors, for some $0<c<1$.

Unfortunately, this is actually even more vulnerable to the same attack. Suppose the curator has enough data to give the true hypothesis between $1/4$ and $3/4$ of the total weight. Then the weight is high enough that the analyst's investigatory queries can isolate that hypothesis, but low enough that querying it directly will surely be incorrect.

Moreover, attempting to introduce stability makes this intermediate state last even longer than it did previously! By only reducing incorrect hypotheses' weights by a factor of $c>0$, the mass on the true hypothesis can only spend more time in the range $[1/4,3/4]$. Thus, stable versions of the posterior mean algorithm actually behave poorly on an even wider range of parameters than the posterior mean itself.

\subsection{Decomposition}\label{decomposition}

This problem is hard for the usual methods, but it's actually quite easy to match the static bound if we allow ourselves to break it open. With a tensor product model like this one, we can distinguish between data from the two component models, and ignore the data from $J_m$ (until there's enough to uniquely determine the function) while fully updating based on the data from $(U_2^8)^{\otimes(q-1)}$. Since each of those behave well on their own, this ``decomposition'' curator strategy works on this problem in particular.

Unfortunately, it's difficult if not impossible to turn this specific strategy hack into a general algorithm. For instance, if we introduce tiny perturbations to the independence of these two components, it becomes hard to separate our knowledge of each, or even to define the two components in the first place.

\subsection{Adaptivity Detection}\label{detect}

Another class of algorithms that does well, but in a fragile way, against this particular problem aims to detect when the analyst is using adaptivity and utilizes previously unseen data whenever it does.

The first such algorithm in the literature, DFHPRR's EffectiveRounds, reserves about 99\% of its data for a ``checker'' and splits the remaining points into $r$ equal-sized ``estimation samples.'' The algorithm proceeds with answers from the first estimation sample until those answers deviate from the checker by more than a noisy threshold, after which the sample is discarded. In this way, if there are only $r$ rounds of adaptivity, the algorithm only needs $O(r)$ times as much data. (To make this better than simple sample splitting, this works even if the researcher doesn't know when the rounds of adaptivty occur.)

The explore-exploit attack we've described only uses two rounds of adaptivity, so EffectiveRounds would defeat it with only a constant increase in the sample size. However, the tensor product construction allows us simply to repeat the same problem $n$ times, and so defeat this algorithm using only $\tilde O(n^5)$ queries.

Could we apply a similar philosophy to do better in a Bayesian context? Perhaps. One note of hope: When the Justesen code problem posterior is uncertain between two hypotheses, only a constant number of additional data points are needed to resolve this uncertainty, in expectation. So if we get stuck in a problem like this, we only need to look at a few more data points to break out. One could imagine a curator algorithm that keeps some number of data points in reserve and updates on them only if the posterior is not concentrated on some query.

Unfortunately, looking at even one extra data point is too many. By a similar trick, the subsequent copies of the Justesen classification problem could involve a slightly higher parameters $m$, causing the curator to look at at least one new data point every $\tilde O(n^4)$ queries. Repeat this $n$ times, and the curator runs out of points to add after only $\tilde O(n^5)$ queries. Again, this isn't elegant, and there could potentially be a viable curator algorithm along these lines, but it won't be trivial.

\subsection{Restricting Models}

Perhaps the general problem that we've posed is too difficult, and we should restrict the models somehow, in addition to ruling out use of asymmetric information. There seem to be two general approaches in the literature to doing this, bottom-up and top-down.

The bottom-up approaches are the most promising, but the most restrictive. In general, they start with a particular use case of adaptive data analysis and build algorithms to handle those in particular, hoping to again to achieve tight sample complexity results in a specific setting.

For instance, Russo and Zou \cite{russo2015controlling} focus on the situation where researchers compute a variety of statistics and report only the ``best'' one or several, such as the smallest $p$-values. In this setting, they are able to control the bias by bounding the mutual information between the choice of statistic(s) and all of the statistics' realized values.

Similarly, Blum and Hardt \cite{blum2015ladder} examine the situation of releasing an approximately correct machine learning competition leaderboard, a common and obvious source of overfitting to a frequently used holdout set. With this objective, the leaderboard algorithm can avoid releasing scores unless a new submission is a significant amount better than the previous best, which effectively limits the information leakage.

Alternatively, a top-down approach hopes to solve the general problem under some restricted conditions that preclude examples like the ones introduced in this paper. The trickiest step here is formulating what the restriction should be.

One possibility is to restrict the dimension of the problem. Indeed, the universe of $J_m\otimes(U_2^8)^{\otimes q-1}$ has dimension $\log\abs\X=\log m+1+8(q-1)$. While big data often deals with situations where the dimension is greater than the number of data points $n$, it might not be as high as $n^4$ (our number of queries).

This is essentially the technique that the guarantees for the original Private Multiplicative Weights \cite{hardt2010multiplicative} algorithm utilize, although their results are written with a multiplicative factor of the dimension (to some power) in the required sample complexity.

While this does yield effective algorithms, it isn't a priori clear why the dimension should play a role in the first place. After all, every query projects the space of distributions $\Delta(\X)$ down to a single dimension, fitting it within $[0,1]$, so the queries themselves are dimension-independent.

\subsubsection{Controlled Trials}

We have one interesting new restriction to offer that might make the problem easier. To understand the motivation for this restriction, consider a common setup in scientific analysis: Randomized controlled trials, also known as A/B testing in business.

In this framework, samples are drawn independently from a population and assigned to one of two groups. One of the groups receives a change of some kind while the other group stays the same or receives an ineffective version of the change (placebo) if appropriate. The study aims to compare some output variables on each group, possibly restricted to subpopulations of the original sample. This type of problem is ripe for high generalization error, because of the range of questions to ask and the opportunity to ask them adaptively.

To formulate this in the same sort of mathematical framework, we consider the universe to be a product $\X\times\Y\times\Z$. Here, $\X$ will capture the demographic data about the population, and we will assume the marginal over $\X$ is known. $\Y$ then captures the group data: Whether the sample was placed in the experimental group or the control group. For simplicity, we will assume there are just two groups, so $\Y=\{0,1\}$, and the samples are uniformly randomly assigned to each group, independent of their demographic data. Finally, $\Z$ is the output variable in question. For even greater simplicity, we will also take $\Z=\{0,1\}$, corresponding to a single binary variable being studied, such as whether someone recovered from an illness or died.

Since the object of study is the difference between the two groups, we will require that all queries be of a specific form:
\begin{equation}\E_{(x,y,z)\sim\vec p}[z1_{x\in S}|y=1]-\E_{(x,y,z)\sim\vec p}[z1_{x\in S}|y=0]\label{rctquery}\end{equation}
for some subset $S\subset\X$. In other words, this is measuring the difference in the probability of the outcome variable being $1$ between the two groups on a specific subpopulation, scaled by the (publicly known) fractional size of the subpopulation.\footnote{This scaling matches that of statistical queries. It also seems appropriate, as opposed to say $\E_{(x,y,z)\sim\vec p}[z|y=1,x\in S]$, because when $S$ is small, we would need more data to estimate this probability accurately.} Now, up to a factor of two and a shift, this is the same as $\Pr[y=z\text{ and }x\in S]$. In other words, for the purposes of queries of the form in \eqref{rctquery}, we can collapse $\Y\times\Z=\{0,1\}^2$ down to two points, identifying the pairs $(0,0)$ and $(1,1)$, and $(0,1)$ and $(1,0)$. Let $w$ be the indicator function on the event $y=z$, so we are asked to estimate $\E[w1_{x\in S}]$.

Moreover, we can consider this as an average value of the function $f:\X\to[0,1],x\mapsto\E[w|x]$ corresponding to the difference in the effect of treatment between the groups, rescaled to fall within $[0,1]$ with $f(x)=1/2$ meaning no effect. Therefore, the queries amount to asking the agreement between the true probability function $g:\X\to[0,1]$ and indicator functions $f(x)=1_{x\in S}$. In other words, these are merely a slight relaxation of classification problems! There is one small change: Instead of measuring agreement with functions to the set $\{0,1\}$, we are measuring agreement with functions to the interval $[0,1]$, where we say that the value $t\in[0,1]$ corresponds to agreement $t$ with 1 and $1-t$ with 0. This is just a convex relaxation of the original classification problem framework we introduced section \ref{classification}.

Restricting to (soft) classification problems might not seem promising since most of our problematic examples come from that framework. However, the tensor product, an integral construction of the final challenging model that created slightly correlated queries, cannot be expressed in a classification model. This gives hope that some version of the posterior mean might solve all classification problems, and therefore provide a framework for avoiding overfitting from multiple comparisons in parallel group randomized controlled trials.

\subsection{The Prior as an Analysis Tool}

The main message of introducing Bayesian adaptive data analysis is not to propose that every problem in usual (frequentist) data analysis be translated into the Bayesian setting by writing down a prior and then doing appropriate posterior inference however long it takes. While there are certainly scenarios where a Bayesian approach is valid, we recognize that modeling all of one's prior beliefs is in general quite difficult.

Instead, we were originally motivated to add the Bayesian prior to this problem to avoid the problematic lower bounds which involved an analyst who already knew the distribution exactly. It is also helpful to see the prior as an analysis tool to distinguish which lower bound considerations are difficult merely because of information asymmetry in the original problem, and which are still applicable in contexts outside of that.

As we've discussed, all of the algorithms that we have described here in fact have very natural analogues in the frequentist setting: The posterior mean compares to the empirical mean, the prior mean can serve as a less arbitrary initial proxy distribution in the private multiplicative weights algorithm, and we can add noise to or round answers produced by either Bayesian or frequentist algorithms.\footnote{While the prior-based smart rounding we introduced does depend on observing the prior, we only actually utilized this to prove that the prior mean could be chosen to be far from a rounding boundary.} By demonstrating the success or lack thereof of an algorithm in the Bayesian setting, the moral implication is that we should treat the frequentist analogue accordingly in practice.

We say ``moral implication'' because the corresponding statements in the frequentist setting are not well defined. Without introducing a prior, we have no rigorous definition of when an analyst is taking advantage of side information. It is only when passing to the Bayesian analogues that this distinction becomes apparent.

In this light, our main result, Theorem \ref{main}, isn't just saying that these specific adjustments to the posterior mean fall short of matching the static bound. Instead, it indicates a new type of limitation to the general program of obfuscating the data that isn't due simply to an information asymmetry: \textbf{Standard methods for obfuscation (even more sophisticated ones) cannot avoid leaking information against slightly correlated queries.} While this result shows a clear separation from the static case, the fact that this only gives a polylogarithmic rather than a polynomial dependence of $n$ on $q$ suggests (in an even less certain sense) that this limitation is not nearly as constraining as the attacks based on information asymmetry.

Taking a step back, this situation is actually somewhat analogous to the field of solar astronomy: When there is a solar eclipse, researchers can study the sun's corona with otherwise overpowering light from the sun itself blocked. In the same way, Bayesian adaptive data analysis blocks the otherwise dominant class of lower bound techniques, those exploiting information asymmetry. With those removed, we can better understand other sources of error in adaptive data analysis as commonly carried out in practice.

\section{Conclusions}

What makes adaptive data analysis inherently hard? What stops the curator from answering as many adaptive queries as he can static queries?

The picture from previous lower bounds was bleak: Powerful analysts with complete knowledge of the distributions they were pretending to try to study could compile errors and catch the curator making a mistake. Having already somehow gotten access to the distribution itself, these analysts just needed $O(n^2)$ queries to crack the curator's $n$ samples are and stump him with a query about the rest of the distribution he hadn't seen. The proposed solutions therefore naturally sought strong differential privacy techniques to protect every data point from the gaze of these nearly omniscient analysts.

For practitioners, though, this worry about superintelligences disguising themselves as curious seekers of truth seems over the top. At the very least, surely this can't be the only difficulty with adaptivity! To try to understand what else makes adaptivity difficult, we translated the problem of DFHPRR over to the Bayesian setting, which we then set out to explore. By ruling out the biggest concern in the previous picture, studying Bayesian adaptive data analysis allows us to learn what other sorts of problems might arise.

Initial scouting reports are mixed. While seemingly less dangerous for curators, the Bayesian setting still has difficult problems and new types of tricky analysts to be concerned about. The obfuscation techniques from the original problem do prove useful in the Bayesian setting as well, but still fall far short. In particular, in Theorem \ref{main}, we constructed one problem on which a carefully probing analyst needs only $\tilde O(n^4)$ queries to cause the curator to make a mistake, against the entire family of previously proposed curator algorithms. This example illustrates a second difficulty in adaptive data analysis: Slightly correlated queries can leak information past the usual obfuscation techniques, which can be problematic if posterior uncertainty becomes concentrated in a single direction.

Future work in Bayesian adaptive data analysis can further explore this world, helping to find and clarify the realistic problems with adaptive data analysis.

\section{Acknowledgments}

I would like to thank Jon Kelner, Jerry Li, Adam Sealfon, and Thomas Steinke for numerous helpful conversations for the entire duration of this work. I also received financial support from the United States Department of Defense (DoD) through the National Defense Science and Engineering Graduate Fellowship (NDSEG) Program.

\bibliography{sources}
\bibliographystyle{plain}

\section{Appendix A: Noisy Posterior Mean on Linear Classification}

In this appendix, we prove Theorem \ref{npmlc}, showing that the noisy posterior mean algorithm answers accurately on the linear classification problem.
\begin{proof}First, if $n>2m$, with probability $1-2^{-\Omega(n)}$, the curator will have enough data to completely determine the hypothesis (it just takes $m+1$ novel points, and each point is novel with probability $>1/2$). In that case, the error of the noisy posterior will simply be the Gaussian noise added, which is designed to be of size at most $\eps$ with probability at least $1-\frac\delta q$, as desired.

So we may suppose $n\le 2m$. The statement then amounts to showing that the analyst answers correctly on every query, except with probability $q\exp(-\Omega(m\eps^2))>q\cdot2^{-\Omega(m)}$.

Following the sketch in Section \ref{npm}, we wish to understand the analyst's knowledge of the distribution via her posterior. Just like the curator, the analyst's prior in linear classification is uniform on all $2^{m+1}$ possible linear functions. We will inductively prove the following carefully-calibrated claims:

\begin{lem}Let $\delta'=2qm2^{-m/2}$. After $k$ queries for $k\le q\le1/3\delta'$, with probability $1-O(k^2\delta')$, (i) the curator's answers on all queries are correct, and (ii) the analyst's posterior puts weights within $\exp(\pm k/q)/2^{m+1}$ on each hypothesis, except for a $k\delta'^2$ fraction of the hypotheses, which themselves have at most a total weight of $k\delta'$.\end{lem}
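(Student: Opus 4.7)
The plan is induction on $k$. For the base case $k=0$, the analyst's prior is uniform on the $2^{m+1}$ hypotheses, so every weight is exactly $1/2^{m+1}=\exp(\pm 0)/2^{m+1}$, the bad set is empty, and (i) is vacuous. For the inductive step I would assume both conclusions hold after $k$ queries with failure probability $O(k^2\delta')$ and show that processing query $f_{k+1}$ adds at most $O(k\delta')$ to the failure probability while preserving both invariants.

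For part (i) at step $k+1$, I would apply Lemma \ref{priorconc} with $\epsilon = 2^{-m/2}/\delta'$ to isolate a ``new-bad'' set of at most $\delta'^2 \cdot 2^{m+1}$ hypotheses whose true answer on $f_{k+1}$ lies more than $\epsilon$ from the prior mean. Since the true hypothesis is a fresh draw from the uniform prior, it avoids the new-bad set with probability at least $1-\delta'^2$. On that event the curator's posterior mean $\mu$ (itself an average of $\E_{h'} f_{k+1}$ over eligible hypotheses $h'$, which are good with overwhelming probability by a second application of the lemma) lies within $O(\epsilon)$ of the true answer, and adding Gaussian noise of variance $1/(4n)$ shifts the reported $a_{k+1}$ by more than $\epsilon$ with probability only $O(\delta')$.

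For part (ii), the analyst's Bayes update multiplies each weight $w_h^{(k)}$ by the marginal likelihood $L(a_{k+1}\mid h) := \E_{\text{samples}\sim h}[\phi_\sigma(a_{k+1}-\mu_h)]$, where $\mu_h$ denotes the curator's posterior mean on $f_{k+1}$ conditional on those samples and $\phi_\sigma$ is the Gaussian density. For two good hypotheses $h,h'$, both $\mu_h$ and $\mu_{h'}$ concentrate within $O(\epsilon)$ of the prior mean, so the Gaussian log-ratio $(\mu_h-\mu_{h'})(2a-\mu_h-\mu_{h'})/(2\sigma^2)$ is $O(\epsilon\sqrt{n\log q})$, which is $\ll 1/q$ in the regime $q\le 1/(3\delta')$. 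After normalization, each good weight thus scales by $\exp(\pm 1/q)$, upgrading the inductive $\exp(\pm k/q)$ bound to $\exp(\pm(k+1)/q)$. For a new-bad hypothesis the answer $a_{k+1}$ sits deep in the tail of $\phi_\sigma(\,\cdot\,-\mu_h)$, so its likelihood is \emph{smaller} than the typical value; aggregating, the $\delta'^2$-fraction of new-bad hypotheses absorbs at most $e\delta'^2\le\delta'$ of extra mass into the bad set, which thus grows from $k\delta'$ to $(k+1)\delta'$.

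The main obstacle I anticipate is controlling the denominator $\sum_{h''}w_{h''}^{(k)} L(a_{k+1}\mid h'')$ in the Bayesian normalization: I need this to lie within a $1\pm O(1/q)$ factor of the typical likelihood, so that good weights genuinely land in the tightened $\exp(\pm(k+1)/q)$ band rather than drifting. This is where the two invariants feed back on each other—the inductive weight bound implies that $1-O(k\delta')$ of the mass is good and $\exp(\pm k/q)$-uniform there, which in turn controls the denominator spread. A subtler technical issue is that $\mu_h$ is a random function of the samples drawn from $h$, contributing an extra variance to the effective noise; I would handle this by showing $\Var(\mu_h\mid h)=O(1/n)$, of the same order as $\sigma^2$, so the Gaussian ratio argument survives with only constant-factor corrections. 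Union-bounding the per-step $O(\delta')$ failure probability across the $k+1$ queries and adding it to the inductive $O(k^2\delta')$ yields the stated $O((k+1)^2\delta')$ bound.
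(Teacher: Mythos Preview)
Your inductive framework and your treatment of the \emph{good} hypotheses are essentially what the paper does: define a step-$k$ bad set via Lemma~\ref{priorconc} at scale $\eps'\sim 1/(qm)$, condition on the true (and the curator's) hypothesis being good, and bound the Gaussian log-likelihood ratio between any two good hypotheses by $O(1/q)$ per step. (The paper gets the $O(1/q)$ bound more simply by using $|a_{k+1}-\mu|\le 1/2$ rather than $O(\sigma\sqrt{\log q})$, but your route also works in this regime.)

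The genuine gap is in your control of the \emph{bad} mass. Your claim that for a new-bad hypothesis ``the answer $a_{k+1}$ sits deep in the tail of $\phi_\sigma(\,\cdot\,-\mu_h)$'' is false: by design $\eps'\asymp 1/(qm)$ while $\sigma\asymp 1/\sqrt m$, so $\eps'/\sigma\asymp 1/(q\sqrt m)\ll 1$, and a bad hypothesis's mean is well \emph{inside} one standard deviation of the realized answer. More seriously, you never address why the \emph{old} bad mass stays below $k\delta'$ after the $(k{+}1)$st update. An $l$-bad hypothesis can be perfectly ordinary on query $k{+}1$; in fact its $\mu_h$ could land closer to $a_{k+1}$ than any good hypothesis's, giving it a likelihood ratio as large as $\exp\bigl((a_{k+1}-\mu)^2/(2\sigma^2)\bigr)$, which is not $1+O(1/q)$. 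So a per-step likelihood bound cannot keep the cumulative bad mass from drifting. The paper sidesteps this entirely with a different idea: since the analyst's posterior is a \emph{correct} Bayesian update, each hypothesis's weight is a nonnegative martingale, hence the total weight on the hypotheses that became bad at step $l$ (which was $\le e\delta'^2$ at that moment) has the same expectation at every later step, and Markov's inequality bounds it by $\delta'$ at step $k$ except with probability $O(\delta')$. A union bound over $l\le k$ then gives the $k\delta'$ cap on all bad mass. This martingale-plus-Markov step is the missing ingredient; once you insert it, the rest of your outline goes through.
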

\begin{proof}We show this by induction on $k$. From the analyst's perspective, to answer a query, a random hypothesis according to the posterior is chosen, the curator receives $n$ data points from that posterior, answers the query according to the data, and then all of the hypothesis weights are adjusted accordingly. For simplicity, we will actually assume that the curator answers according to a single hypothesis, and argue that the curator's answers are only more true if he answers with the average of several hypotheses.

First, we label as ``$k$-bad'' all hypotheses that give answers more than $\eps'=\frac1{4qm}\ll\eps$ away from the mean on the $k$th query, and as ``$k$-good'' any hypothesis that is not $l$-bad for any $l\le k$. We already know there won't be many $k$-bad hypotheses: By Lemma \ref{priorconc}, a $\frac{2^{-m}}{4\eps'^2}=4q^2m^22^{-m}=\delta'^2$-fraction of the hypotheses are $k$-bad, so all but a $k\delta'^2$ fraction of the hypotheses will be $k$-good, as desired. We now must show that the $k$-good hypotheses maintain approximately similar weights, while bad hypotheses have small total weight, with high probability.

We then condition on the randomly chosen hypothesis (in the analyst's model of how the curator works) being $(k-1)$-good, which adds $(k-1)\delta'$ to the error probability by the inductive hypothesis. Then we condition on it not being $k$-bad, which adds at most $\exp((k-1)/q)\delta'^2<4\delta'^2<\delta'$ to the error probability, again by the induction hypothesis's upper bound on all of the $(k-1)$-good hypotheses' weights. In all, this adds only $k\delta'$ to the error probability, which is fine since we have $O(k^2-(k-1)^2)\delta'=O(k)\delta'$ extra error probability to work with. So we may assume that the true hypothesis's answer $a$ is within $\eps'$ of the mean $\mu$ on the query. Moreover, with the same error probability, the curator's estimate is also within $\eps'$ of the mean $\mu$ on the query,\footnote{In the real scenario where the curator answers according to multiple hypotheses, we have a little more work to do. Digging into the variance-based proof of Lemma \ref{priorconc} would show that averages of hypotheses are even more likely to fall close to the mean than individual hypotheses, and therefore, the unnoised answers of the curator will be $k$-good.} so the curator's unnoised answer is $2\eps'<1/qn<1/2\sqrt n<\eps/2$-accurate. Since the noise is calibrated to be less than $\eps/2$ with probability $1-\delta/2q$, this means that the curator's answers are accurate with the desired probability, satisfying (i).

The analyst then updates her beliefs according to this noised answer. Since the curator adds Gaussian noise with variance $1/4n>1/2m$, the likelihood density of answering with $a'$ is proportional to $\exp(-m(a-a')^2)$, where $a$ is the answer of the true hypothesis.\footnote{The coefficient of $m$ in this expression is possibly lower, but this will only make the weights more similar to each other.} By Bayes' rule, this is the factor we multiply the weights on hypotheses with answer $a$, before renormalizing. If $a'$ is outside of the range $[\mu-\eps',\mu+\eps']$, the largest and smallest weights to good hypotheses will occur on the ends of the interval, so the likelihood ratio of any pair of good hypotheses is bounded by
\[\exp\abs{m(\mu+\eps'-a')^2-m(\mu-\eps'-a')^2}=\exp(4m\eps'\abs{\mu-a'})\le\exp(2m\eps')=\exp(1/2q),\]
where we have simply bounded $\abs{mu-a'}<\eps<1/2$ with very high probability.\footnote{For simplicity, we are analyzing a version of the noisy posterior mean algorithm that does not truncate all answers to fall within $[0,1]$. However, it is clear that such truncation only reduces the information that the analyst receives.} If $a'\in[\mu-\eps',\mu+\eps']$, the smallest weights, near 1, will fall on any hypotheses with answers close to $a'$, so the likelihood ratio will be bounded by the weight at an endpoint: $\exp(m(2\eps')^2)<\exp(2m\eps')=\exp(1/2q)$ as well.

Therefore, all of the good hypotheses maintain approximately the same weights as each other: No pair can have a likelihood ratio greater than $\exp(1/2q)$. To show that this means all good hypotheses indeed have weights near their original values, we must bound the total weight on bad hypotheses.

The argument here is a little different. When the new $l$-bad (i.e. also $l-1$-good) hypotheses are determined by the choice of the $l$th query, they collectively have weight at most $\le\exp(l/q)\delta'^2<4\delta'^2$. Now, since this is a correct posterior, those hypothesis weights are also equal to the expectation of their weights after any future data. So by Markov's inequality, after the $k$th query, the new $l$-bad hypotheses have weight at most $\delta'$ with probability $1-4\delta'$, and collectively, by a union bound, all $l$-bad hypotheses for $l\le k$ have weight at most $k\delta'$ with probability $1-4k\delta'$. Since we again have error probability $O(k\delta')$ to spare, we have proved the last claim of the lemma.

Finally, since the bad hypotheses have total weight at most $k\delta'$, the average weight on the $k$-good hypotheses is between $\frac1{1-k\delta'^2}$ and $1-k\delta'$ times their original weight ($2^{-m-1}$). But $1-k\delta'^2>1-k\delta'\ge1-k/3q>\exp(-k/2q)$, using the assumed bound on $q$, so the average weight of the $k$-good hypotheses is within $\exp(\pm k/2q)$ of their original weight. Since no pair of $k$-good hypotheses has weight ratio exceeding $\exp(k/2q)$, this means that every $k$-good hypothesis has a weight within $\exp(\pm k/q)$, as desired.\end{proof}

Taking $k=q$, then, the curator is $\eps$-accurate with probability at least $1-O(q^2\delta')=1-O(q^3m2^{-m/2})$. (Clearly this is meaningless unless $q<1/3\delta'$.) Since $m\le O(2^{m/6})$, we can also write this as $1-O(q^32^{-m/3})$. Therefore, when $q\le O(2^{m/9}\delta^{1/3})$, the error probability is at most $\delta$. Translating this in terms of $n\le2m$, we have shown that the curator wins if $n>18\log q+6\log\Omega(1/\delta)=\Omega(\log q/\delta)$. Therefore, if $n\ge\Omega\left(\frac1{\eps^2}\log\frac q\delta\right)$, the noise is small and the curator is accurate, as desired.\end{proof}

\section{Appendix B: Constructing Smart Partitions}

In this appendix, we prove Lemma \ref{safepart}, showing that smart partitions exist.
\begin{proof}First, we show it suffices to prove a single point version of this, which is simple enough that it may be of independent interest:
\begin{lem}\label{safepoint}For any distribution $D$ on $[0,1]$, there exists a point $x\in(0,1)$ such that $\forall\eta>0$,
\[\Pr_D[[x-\eta,x]],\Pr_D[[x,x+\eta]]\le2\eta.\]\end{lem}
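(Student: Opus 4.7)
The plan is to bound the Lebesgue measure of the right- and left-bad sets each by $1/2$ via F.~Riesz's Rising Sun Lemma, and then extract a good point by connectedness of $(0,1)$. Throughout, let $F$ denote the CDF of $D$ and assume first that $F$ is continuous; atoms can be handled by a mollification at the end.

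First I would set $G(x)=F(x)-2x$, so that the right-bad set
\[B_+=\{x\in(0,1):\exists\eta>0,\ F(x+\eta)-F(x)>2\eta\}\]
is exactly $\{x:\exists y>x,\ G(y)>G(x)\}$, which is the Rising Sun set for the continuous function $G$. The Rising Sun Lemma then writes $B_+$ as a countable disjoint union of open intervals $(a_i,b_i)\subset(0,1)$ with $G(a_i)=G(b_i)$ (or the boundary case $G(a_i)\le G(b_i)$ when $a_i=0$). Translating back, $F(b_i)-F(a_i)\ge 2(b_i-a_i)$, and summing over the disjoint family yields
\[2|B_+|\le\sum_i(F(b_i)-F(a_i))\le F(1)-F(0)=1,\]
so $|B_+|\le 1/2$. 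The left-bad set $B_-$ is treated identically by applying the same argument to the reflected CDF $\tilde F(x)=1-F(1-x)$, giving $|B_-|\le 1/2$.

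Both $B_+$ and $B_-$ are open subsets of $(0,1)$. If no good $x$ existed, i.e.\ $B_+\cup B_-=(0,1)$, then inclusion-exclusion would force $|B_+\cap B_-|\le|B_+|+|B_-|-1\le 0$, so the open set $B_+\cap B_-$ would be empty. But then $B_+$ and $B_-$ would form a disjoint open cover of the connected set $(0,1)$, a contradiction. Hence $(0,1)\setminus(B_+\cup B_-)$ is nonempty, yielding a good $x$.

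The main obstacle I anticipate is the careful treatment of distributions with atoms, since the condition $F(x)-F(x-\eta)>2\eta$ subtly depends on the open/closed convention for the interval endpoint $x$ when $D$ has an atom there. I would resolve this by approximating $D$ by continuous distributions (say, convolving with a tiny uniform kernel) and passing to a limit of good points via compactness of $[0,1]$, with the appropriate one-sided semicontinuity argument to verify that the limit point satisfies the bound for all $\eta>0$.
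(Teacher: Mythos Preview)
Your argument is correct and takes a genuinely different route from the paper's. The paper proceeds constructively: assuming no good point exists, it builds an increasing sequence $0 = x_0 < y_0 < x_1 < y_1 < \cdots$ by alternately following the CDF up a slope-$2$ line from $x_k$ to its last crossing $y_k$, then across to a parallel slope-$2$ line and down to its first crossing $x_{k+1}$; the limit $L$ of this sequence is then checked by hand to satisfy both one-sided bounds. Your proof instead packages the same geometric content into the Rising Sun Lemma applied to $G(x)=F(x)-2x$, extracts the global measure bounds $|B_\pm|\le 1/2$, and finishes with a clean inclusion--exclusion plus connectedness step (and you are right that if $B_+\cap B_-=\emptyset$ then connectedness forces one of $B_\pm$ to be all of $(0,1)$, contradicting the measure bound).

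What each approach buys: the paper's iteration is self-contained and essentially constructive---its sequence is walking along endpoints of your Rising Sun intervals---and this feeds directly into the algorithmic corollary for finitely supported $D$ that the paper proves next. Your approach is shorter, more conceptual, and makes the role of the constant $2$ transparent: it is exactly the slope at which the total length of the Rising Sun shadow is forced to be at most $1/2$. Your proposed treatment of atoms by mollification and a compactness limit is essentially what the paper does too (``by the density of continuous functions''); just note that the limit point can land on $\{0,1\}$ for genuinely atomic $D$ (e.g.\ a unit mass at $1/2$), which matches how the paper's subsequent corollary is stated with $x\in[0,1]$ rather than $(0,1)$.
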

\begin{rmk}This lemma is tight: If $D$ is identically equal to $1/2$, then taking $\eta=\abs{\frac12-x}$ makes one of the terms on the left equal to 1 and $2\eta\le1$.\end{rmk}
For $i=1,\dotsc,m-1$, we will pick $x_i\in I_i\left(\frac{2i}3\eps,\frac{2i+1}3\eps\right)$, so $x_{i+1}-x_i>\left(\frac{2i+2}3-\frac{2i+1}3\right)\eps=\frac\eps3$ and $x_{i+1}-x_i<\left(\frac{2i+3}3-\frac{2i}3\right)\eps=\eps$. Define another distribution $D'$ as follows: For each $i$, move all of the weight from $\left[\frac{4i-1}6\eps,\frac{2i}3\eps\right]$ to the point $\frac{2i}3\eps$, and all of the weight from $\left[\frac{2i+1}3,\frac{4i+3}6\eps\right)$ to $\frac{2i+1}3\eps$. Since the statement of Lemma \ref{safepart} is trivial for $\eta\ge\frac\eps6$, weight in those intervals is too far from $x_{i-1}$ or $x_{i+1}$ to be included in either term, so only its distance from $x_i$ matters. Moreover, we've decreased that distance, so it suffices to prove the result on $D'$ instead of $D$, under the constraint that each $x_i\in I_i$.

Assuming that $D'$ puts positive weight in $I_i$ (otherwise we can ignore that interval), consider $D'$ restricted to $I_i$, and rescaled by a factor of $3/\eps$ to become an interval on $[0,1]$; call this new distribution $D_i'$. By Lemma \ref{safepoint}, there exists some $x'\in[0,1]$ such that $\forall\eta>0$,
\begin{align*}
\Pr_{D_i'}[(x'-\eta,x')],\Pr_{D_i'}[(x',x'+\eta)]&<2\eta\\
\Pr_{D'}[(x-\eta\eps/3,x)],\Pr_{D'}[(x,x+\eta\eps/3)]&<2\eta\Pr_{D'}[x\in I_i]\\
\Pr_{D'}[(x-\eta,x)],\Pr_{D'}[(x,x+\eta)]&<\frac{6\eta}\eps\Pr_D[x\in I_i]\\
\Pr_{D'}\left[\bigcup_{i=1}^{m-1}(x_i-\eta,x_i)\right],\Pr_{D'}\left[\bigcup_{i=1}^{m-1}(x_i,x_i+\eta)\right]&<\frac{6\eta}\eps.
\end{align*}
where $x'$ is the image of $x$ under the rescaling, and in the final line we have used the fact that the intervals $I_i$ are disjoint. Having proved the result for $D'$, we have shown that Lemma \ref{safepoint} implies Lemma \ref{safepart}, apart from the algorithmic statement.\end{proof}
\begin{proof}[Proof of Lemma \ref{safepoint}]By the density of continuous functions, we may assume that $D$ has a continuous cumulative density function (no point masses). Then we can write $\Pr_D[(a,b)]=\Pr_D[[a,b]]=:\Pr[a,b]$ for short.

Suppose the statement is false, so for all $x\in[0,1]$ there exists $\eta$ such that either $\Pr[x-\eta,x]>2\eta$ or $\Pr[x,x+\eta]>2\eta$. We will construct a series of points which will show a violation of this claim.

We start at $x=0$, where we can rule out one of the possibilities, since $\Pr[-\eta,0]=0$. We will first construct a series of points $0=x_0<y_0<x_1<y_1<\dotsb$ inductively such that (i) $\Pr[0,x_k]\ge x_k$, (ii) $\Pr[0,x_k]-2x_k$ is strictly decreasing in $k$, and for all $\eta>0$, (iii) $\Pr[x_k-\eta,x_k]<2\eta$ and (iv) $\Pr[y_k,y_k+\eta]<2\eta$. To visualize one step of this, see Figure \ref{cdf}.

\begin{figure}
\begin{center}\begin{tikzpicture}[scale=1.5]
\path [fill=red!20] (3,5) -- (3.5,6) -- (3,6) -- cycle;
\path [fill=red!20] (4.3,5.6) -- (4.3,1) -- (2,1) -- cycle;
\draw (1,1) rectangle (6,6);
\draw (2,3) -- (3.5,6);
\draw (3,3) -- (4.5,6);
\draw [dashed] (2,3) -- (3,3) -- (3,5);
\draw [dashed] (4.3,3) -- (4.3,5.6);
\draw [very thick] (2,3) to [out=30,in=220] (3,5) to [out=40,in=190] (4.3,5.6);
\draw [dotted] (2,3) -- (5,6);
\node [below] at (2,3) {$x_k$};
\node [below] at (3,3) {$y_k$};
\node [below] at (4.3,3) {$x_{k+1}$};
\end{tikzpicture}
\caption{Constructing the sequence in the proof of Lemma \ref{safepoint} from the cumulative distribution function.}\label{cdf}
\end{center}
\end{figure}
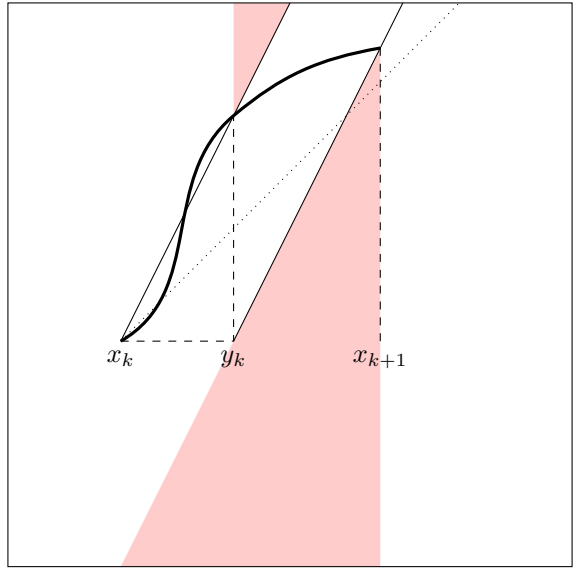

By assumption, there exists some $y>x_k$ such that $\Pr[x_k,y]\ge2(y-x_k)$. Define $y_k=\sup\{y:\Pr[x_k,y]\ge2(y-x_k)\}$, the largest such value. By continuity, we must have $\Pr[x_k,y_k]=2(y_k-x_k)$, which implies that this is where the cdf crosses the solid line with slope 2 out of $x_k$ in Figure \ref{cdf}. Then for any $\eta>0$, we must have $\Pr[x_k,y_k+\eta]<2(y_k+\eta-x_k)=\Pr[x_k,y_k]+2\eta$, so $\Pr[y_k,a_k+\eta]<2\eta$, satisfying (iv). In Figure \ref{cdf}, this shows that the function does not enter the upper red region.

Now define $x_{k+1}=\inf\{b:\Pr[x_k,b]\le2(b-y_k)\}$. This is depicted in Figure \ref{cdf} as the first intersection with a second solid line parallel to the first. Indeed, this must exist because $1$ is a member of the set. This takes a few steps to verify: First, $\Pr[x_k,1]\ge\Pr[x_k,y_k]=2(y_k-x_k)$. Second, $\Pr[x_k,1]=1-\Pr[0,x_k]\le1-x_k$ by the inductive step property (i). Doubling the second inequality and subtracting the first (which aligns the inequalities), $\Pr[x_k,1]\le2(1-x_k)-2(y_k-x_k)=2(1-y_k)$, as claimed. In Figure \ref{cdf}, this shows that the upper right corner is still below the second solid line, so the function must cross it somewhere.

Again, continuity implies that $\Pr[x_k,x_{k+1}]=2(x_{k+1}-y_k)$, so $x_{k+1}>y_k$ as claimed. Therefore, for $0<\eta\le x_{k+1}-x_k$, $\Pr[x_k,x_{k+1}-\eta]>2(x_{k+1}-\eta-y_k)=\Pr[x_k,x_{k+1}]-2\eta$, so $\Pr[x_{k+1}-\eta,x_{k+1}]<2\eta$, satisfying (iii) for small $\eta$. This also implies that $\Pr[x_k,x_{k+1}]<2(x_{k+1}-x_k)$, so $\Pr[0,x_{k+1}]-2x_{k+1}<\Pr[0,x_k]-2x_k$, satisfying (ii). If $\eta>x_{k+1}-x_k$, $\Pr[x_{k+1}-\eta,x_{k+1}]=\Pr[x_{k+1}-\eta,x_k]+\Pr[x_k,x_{k+1}]\le2(x_k-x_{k+1}+\eta)+2(x_{k+1}-x_k)=2\eta$ by the induction hypothesis, satisfying (iii) for large $\eta$ as well. This shows that the function does not enter the lower red region in Figure \ref{cdf}.

Finally, since $x_{k+1}\ge y_k$, we have $\Pr[x_k,x_{k+1}]\ge\Pr[x_k,y_k]=2(y_k-x_k)$. Averaging this with $\Pr[x_k,x_{k+1}]=2(x_{k+1}-y_k)$, we have $\Pr[x_k,x_{k+1}]\ge x_{k+1}-x_k$. This shows that the function is above the dotted line with slope 1 when it reaches $x_{k+1}$. Therefore, $\Pr[0,x_{k+1}]=\Pr[0,x_k]+\Pr[x_k,x_{k+1}]\ge x_k+(x_{k+1}-x_k)=x_{k+1}$, satisfying (i).

Therefore, our recursive sequence construction goes through. Now, as an increasing sequence in the compact interval $[0,1]$, this sequence must have a limit $L$. We claim that for all $\eta>0$, $\Pr[L-\eta,L],\Pr[L,L+\eta]\le2\eta$, i.e. that $L$ proves the lemma. Let us first prove the first claim. Suppose for some $\eta>0$, $\Pr[L-\eta,L]>2\eta$. Because $x_0=0$ and $x_k\to L$, there exists some $k$ such that $x_k<L-\eta\le x_{k+1}$. By condition (iii) on $x_{k+1}$, we must have $\Pr[L-\eta,x_{k+1}]\le2(x_{k+1}-(L-\eta))$, so $\Pr[x_{k+1},L]\ge2\eta-2(x_{k+1}-(L-\eta))=2(L-x_{k+1})$.

On the other hand, by continuity, we must have $\lim_{k\to\infty}(\Pr[0,x_k]-2x_k)=\Pr[0,L]-2L$. Since we proved (property (ii)) that this sequence is decreasing, we must have $\Pr[0,x_{k+1}]-2x_{k+1}>\Pr[0,L]-2L$. Therefore, $\Pr[x_{k+1},L]<2(L-x_{k+1})$. This is a contradiction, so no such $\eta$ exists.

Finally, suppose that for some $\eta>0$, $\Pr[L,L+\eta]>2\eta$. Since $\Pr[0,x_k]-2x_k$ converges to $\Pr[0,L]-2L$, there exists some $k$ such that $(\Pr[0,x_k]-2x_k)-(\Pr[0,L]-2L)<\Pr[L,L+\eta]-2\eta$. Rearranging, this means that $\Pr[x_k,L+\eta]>2(L+\eta-x_k)$. But since $y_k=\sup\{y:\Pr[x_k,y]\ge2(y-x_k)\}$, this implies that $y_k\ge L+\eta$, which is impossible as $L>y_k$ as the limit of an increasing sequence. So no such $\eta$ exists, and we conclude that $L$ satisfies what we need.\end{proof}
Finally, for the algorithmic statement, since Lemma \ref{safepart} calls Lemma \ref{safepoint} $O(1/\eps)$ times, it suffices to bound how long each call will take:
\begin{cor}If $D$ is a discretely supported distribution with support size $s$, then there exists an $O(s^3)$-time algorithm to find some $x\in[0,1]$ such that $\forall\eta>0$, $\Pr[[x-\eta,x]],\Pr[[x,x+\eta]]\le2\eta$.\end{cor}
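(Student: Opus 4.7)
The plan is to execute the constructive proof of Lemma \ref{safepoint} explicitly on the CDF of $D$, with a preliminary smoothing step to restore continuity. First I would replace each point mass $m_i$ at a support point $p_i$ by uniform mass on $[p_i, p_i+\xi]$ for $\xi>0$; the resulting CDF $F$ is continuous, piecewise linear with $O(s)$ breakpoints, and any $x$ satisfying the two probability inequalities against the perturbed distribution for all small $\xi$ also satisfies them against $D$ by sending $\xi\to0$ and using continuity of the two probabilities in $\eta$.

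I would then simulate the recursion from the proof directly on $F$. Starting with $x_0=0$, at each step I compute $y_k=\sup\{y:F(y)-F(x_k)\ge2(y-x_k)\}$ by scanning the linear pieces of $F$ to the right of $x_k$ and finding the rightmost place where the function $F(y)-2y$ drops to the level $F(x_k)-2x_k$; similarly compute $x_{k+1}$ as the first intersection of the line $F(y_k)+2(y-y_k)$ with $F$ from above. Each such search takes $O(s)$ arithmetic operations, since $F$ has $O(s)$ pieces and the per-piece computation reduces to solving one linear equation.

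The main obstacle is bounding the number of iterations, since the abstract proof only produces a limit $L$ and not termination. I would argue that the strictly decreasing sequence $F(x_k)-2x_k$ can only cross a specific ``critical level'' determined by each breakpoint of $F$ once, so the sequence either terminates in $O(s)$ steps or eventually stays inside a single linear piece of $F$. In the latter case, once the sequence is confined to one piece with known slope and intercept, the limit $L$ can be solved for in closed form from the piece's equation together with the current values of $F(x_k)$ and $F(y_k)$, which identifies $L$ analytically in $O(1)$ additional work. Combined, the main loop runs in $O(s^2)$ time, comfortably inside the claimed $O(s^3)$ budget; the extra factor leaves room for the initial sort of the support into breakpoints and for a final verification that the returned $x$ satisfies both probability inequalities against the original unperturbed $D$.
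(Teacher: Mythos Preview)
Your approach is quite different from the paper's, which never runs the recursion at all: it argues that the set of valid $x$ is closed, takes its maximum, observes that at the maximum one of the two inequalities must hold with equality for some $\eta$, and that discreteness then forces the tight interval to end at a support point of $D$. This yields $O(s^2)$ candidate values of $x$ (one for each consecutive block of support points and each choice of which endpoint is tight), each checkable in $O(s)$ time by testing only $\eta=|x-d_i|$ for the $s$ support points $d_i$.

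Your simulation idea has two genuine gaps as written. First, the smoothing step is not an algorithm: the point you extract depends on $\xi$, so ``sending $\xi\to 0$'' is a limiting process rather than a finite computation, and you have not shown that for any \emph{fixed} explicit $\xi$ the output also satisfies the inequalities for the original discrete $D$ (spreading mass rightward can strictly decrease $\Pr[[x-\eta,x]]$, so a point valid for $D_\xi$ need not be valid for $D$). Second, your termination argument is wrong in its details: writing $G(y)=F(y)-2y$, one has $G(y_k)=G(x_k)$ with $y_k>x_k$, which forces a breakpoint of $G$ in $(x_k,y_k]$; hence $x_k$ and $y_k$ can never lie in a single linear piece, and your ``closed-form limit inside one piece'' scenario is vacuous. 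The correct observation is that the disjoint intervals $(x_k,y_k]$ each contain a distinct breakpoint, so the recursion actually \emph{terminates} in $O(s)$ steps (at some $x_k$ with no $y>x_k$ satisfying $G(y)\ge G(x_k)$), with no limit to compute. With these repairs---and working directly with the right-continuous step CDF rather than smoothing---your strategy would in fact give an $O(s^2)$ algorithm, but the proposal as stated does not establish it.
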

\begin{proof}Consider the set of such $x$, which is nonempty by Lemma \ref{safepoint}. We claim this set is closed. Indeed, consider a sequence $x_1,x_2,\dotsc$ that each satisfy the condition and converge to some $x\in[0,1]$. Suppose without loss of generality that $\Pr[[x,x+\eta]]>2\eta+\eps$ for some $\eps>0$. Then there exists some $x_k$ such that $\abs{x_k-x}<\eps/2$. If $x_k>x$, then by assumption, $\Pr[[x,x_k]]\le2(x_k-x)$ and $\Pr[[x_k,x+\eta]]\le2(x+\eta-x_k)$ so $\Pr[[x,x+\eta]]\le2\eta$, a contradiction. If $x_k<x$, then since $x_k>x-\eps/2$, $\Pr[x,x+\eta]]\le\Pr[[x_k,x_k+\eta+\eps/2]]\le2(\eta+\eps/2)$, a contradiction. So indeed, $x$ must satisfy the condition and the set of such $x$ is closed.

Therefore, this set has a maximum. At this maximum, we must have at least one equality; suppose first that $\Pr[[x,x+\eta]]=2\eta$ for some $\eta>0$. Since $D$ is discretely supported and for any $\eps>0$, $\Pr[[x,x+\eta-\eps]]\le2(\eta-\eps)<2\eta=\Pr[[x,x+\eta]]$, $x+\eta$ must be in the support. Similarly, if $\Pr[[x-\eta,x]]=2\eta$, $x-\eta$ must be in the support.

Let the support of $D$ be $d_1<d_2<\dotsb<d_s$. The tight interval $[x-\eta,x]$ or $[x,x+\eta]$ contains some consecutive subset of these $d_i,d_{i+1},\dotsc,d_j$, and either $d_i=x-\eta$ or $d_j=x+\eta$. Moreover, the sum of the weights at these points is exactly $2\eta$, so the two possible values for $x$ can be determined from the subset. There are $O(s^2)$ possible subsets of consecutive points, so $O(s^2)$ possible $x$'s to check. Moreover, for each candidate $x$, we only have to check $s$ possible values of $\eta$ ($\abs{x-d_i}$ for $i=1,\dotsc,s$). Therefore, the brute force algorithm of checking all such $x$ takes $O(s^3)$ time to find one that works.\end{proof}

\end{document}